\documentclass{article}

\usepackage{microtype}
\usepackage{graphicx}
\usepackage{subcaption}
\usepackage{booktabs} 

\usepackage{hyperref}

\usepackage[preprint]{icml2026}

\usepackage{amsmath}
\usepackage{amssymb}
\usepackage{mathtools}
\usepackage{amsthm}

\usepackage{algorithm}
\usepackage{algorithmic}
\usepackage{bm}
\usepackage{caption}

\usepackage{booktabs}
\usepackage{array}
\usepackage{xcolor}
\usepackage{colortbl}
\usepackage{tabularx}

\usepackage{multirow}

\definecolor{highlightgray}{gray}{0.9}

\usepackage[capitalize,noabbrev]{cleveref}

\theoremstyle{plain}
\newtheorem{theorem}{Theorem}[section]

\newtheorem{lemma}[theorem]{Lemma}

\theoremstyle{definition}

\theoremstyle{remark}

\newcommand{\ForComment}[2]{
    \renewcommand{\algorithmicdo}{\textbf{do} \hfill \textit{#2}}
    \FOR{#1}
    \renewcommand{\algorithmicdo}{\textbf{do}}
}
\usepackage{enumitem}
\usepackage{tikz}
\usepackage{tikz-layers}
\usetikzlibrary{shadows.blur}
\usetikzlibrary{decorations.pathreplacing}

\usepackage{widetext}

\newcommand{\ourmethod}{PISA}

\definecolor{iccvblue}{rgb}{0.21,0.49,0.74}
\definecolor{lightyellow}{RGB}{240,230,149}
\definecolor{tableblue}{rgb}{0.9,0.945,0.988}
\definecolor{tablebrown}{rgb}{0.949,0.933,0.9255}
\definecolor{darkred}{rgb}{0.8,0.004,0}
\definecolor{darkgreen}{rgb}{0,0.6,0}

\hypersetup{colorlinks, citecolor=iccvblue, linkcolor=iccvblue, urlcolor=iccvblue}

\usepackage[textsize=tiny]{todonotes}

\icmltitlerunning{\ourmethod{}: Piecewise Sparse Attention Is Wiser for Efficient Diffusion Transformers}

\begin{document}

\twocolumn[
  \icmltitle{\ourmethod{}: Piecewise Sparse Attention Is Wiser for Efficient Diffusion Transformers}



  \icmlsetsymbol{equal}{*}
  
  \begin{icmlauthorlist}
    \icmlauthor{Haopeng Li}{hkustgz}
    \icmlauthor{Shitong Shao}{hkustgz}
    \icmlauthor{Wenliang Zhong}{hkustgz}
    \icmlauthor{Zikai Zhou}{hkustgz}
    \icmlauthor{Lichen Bai}{hkustgz}
    \icmlauthor{Hui Xiong}{hkustgz}
    \icmlauthor{Zeke Xie}{hkustgz}
  \end{icmlauthorlist}

  \icmlaffiliation{hkustgz}{The Hong Kong University of Science and Technology (Guangzhou)}

  \icmlcorrespondingauthor{Zeke Xie}{zekexie@hkust-gz.edu.cn}

  \icmlkeywords{Machine Learning, ICML}

  \vskip 0.2in
]

\begin{widetext}
    \begin{center}
        \centering
        \captionsetup{font={small}, skip=-1pt}
        \vspace{-9mm}
        \captionsetup{skip=3pt}
        \input{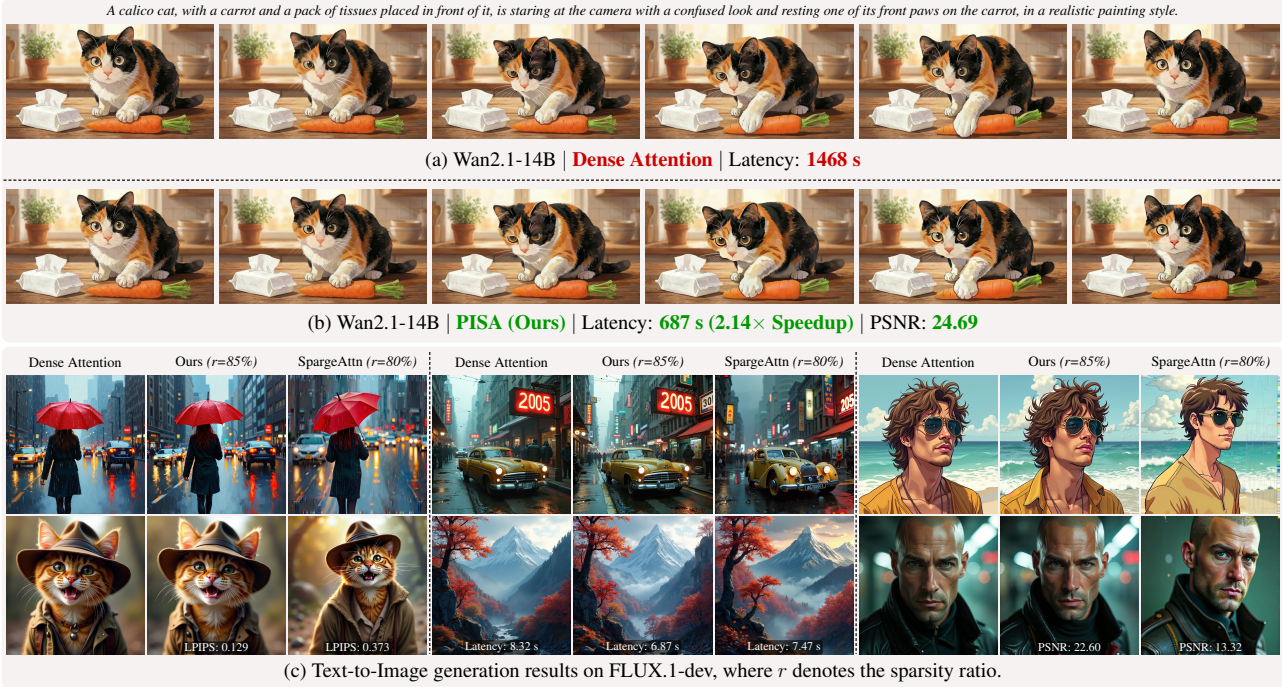}
        \vspace{-10pt}
        \captionof{figure}{\textbf{\ourmethod{} accelerates diverse generation tasks}. \textbf{Top (a, b)}: Wan2.1-14B video generation. \ourmethod{} achieves 2.14$\times$ speedup over Dense Attention with no appreciable quality loss. \textbf{Bottom (c)}: FLUX.1-dev text-to-image generation. \ourmethod{} at higher sparsity ratio \textit{r=85\%} preserves better quality and structure than SpargeAttn~\cite{zhang2025spargeattn}.}
        \label{fig:title}
        \vspace{-6pt}
    \end{center}
\end{widetext}



\printAffiliationsAndNotice{}  

\begin{abstract}
Diffusion Transformers are fundamental for video and image generation, but their efficiency is bottlenecked by the quadratic complexity of attention. While block sparse attention accelerates computation by attending only critical key-value blocks, it suffers from degradation at high sparsity by discarding context. 
In this work, we discover that attention scores of non-critical blocks exhibit distributional stability, allowing them to be approximated accurately and efficiently rather than discarded, which is essentially important for sparse attention design. 
Motivated by this key insight, we propose \textbf{\ourmethod{}}, a training-free \underline{Pi}ecewise \underline{S}parse \underline{A}ttention that covers the full attention span with sub-quadratic complexity. 
Unlike the conventional ``keep-or-drop'' paradigm that directly drop the non-critical block information, \ourmethod{} introduces a novel ``exact-or-approximate'' strategy: it maintains exact computation for critical blocks while efficiently approximating the remainder through \textit{block-wise Taylor expansion}. This design allows \ourmethod{} to serve as a faithful proxy to full attention, effectively bridging the gap between speed and quality. 
Experimental results demonstrate that \ourmethod{} achieves 1.91$\times$ and 2.57$\times$ speedups on Wan2.1-14B and Hunyuan-Video, respectively, while consistently maintaining the highest quality among sparse attention methods. Notably, even for image generation on FLUX, \ourmethod{} achieves a 1.2$\times$ acceleration without compromising visual quality.
\href{https://github.com/xie-lab-ml/piecewise-sparse-attention}{\underline{Code}} is available.

\end{abstract}
\newpage
\section{Introduction}
\label{sec:intro}

Diffusion Transformers (DiTs)~\cite{peebles2023scalable} have demonstrated impressive performance and scalability in generating high-fidelity images and videos, leading to their widespread adoption across diverse visual generation tasks~\cite{arnab2021vivit,hong2022cogvideo,wan2025}. However, as the demand for higher resolutions and longer video durations grows, the sequence length of input tokens increases dramatically. Consequently, the quadratic complexity of the self-attention mechanism~\cite{vaswani2017attention} becomes a significant bottleneck, resulting in prohibitively low inference efficiency for large-scale DiTs.

To address the computational bottleneck, especially in high-resolution image and video generation, recent research has leveraged the inherent sparsity in DiTs to enable sparse attention. Early works~\cite{zhang2025fast,xisparse,yang2025sparse} capitalized on the spatiotemporal redundancy of video diffusion transformers to introduce static, training-free sparse attention patterns. To improve adaptability, other methods~\cite{zhang2025spargeattn,xuxattention,xia2025training} propose computing sparse patterns dynamically at runtime. Moving beyond training-free approaches, methods such as VSA~\cite{zhang2025vsa} and Radial Attention~\cite{li2025radial} have explored trainable sparse attention; works like SANA~\cite{xie2024sana,chen2025sana} and Linfusion~\cite{liu2024linfusion} have adopted linear attention for efficient generation, while methods such as SLA~\cite{zhang2025sla} have made preliminary attempts to combine sparse and linear attention.

However, existing methods still face inherent limitations: (1) Hard truncation: Sparse attention directly discards key-value pairs, leading to performance drops at high sparsity and inefficiency on shorter sequences (e.g., 4K tokens). (2) Incompatibility with pre-trained weights: Linear and hybrid attention fundamentally alter the attention distribution of pre-trained models, precluding the direct reuse of weights and necessitating expensive retraining. These limitations underscore the need for \textit{a unified mechanism that enhances efficiency without sacrificing quality or requiring retraining}.

To this end, we propose \textbf{\ourmethod{}}, a novel training-free sparse attention that accelerates DiTs while maintaining high accuracy through piecewise computation. Unlike standard sparse attention, which computes only critical blocks and discards the rest, \ourmethod{} treats attention as a piecewise process: (1) Exact computation for sparse key-value blocks to preserve critical information; (2) Approximation for the remaining blocks using \textit{block-wise Taylor expansion} to cover the massive amount of non-critical information. Specifically, we propose a hybrid-order approximation strategy that uses block-wise zero-order expansion and global first-order approximation to efficiently improve accuracy. This enables \ourmethod{} to significantly enhance approximation fidelity relative to full attention, incurring only negligible computational overhead compared to standard sparse attention, as illustrated in Fig.~\ref{fig:attn_score}. These dual computational pathways are fused into the online softmax process via a custom kernel, allowing \ourmethod{} to achieve a state-of-the-art trade-off efficiency speed and accuracy without any training.

Extensive experiments demonstrate the superiority of \ourmethod{}. It accelerates Wan2.1-14B~\cite{wan2025} and Hunyuan-Video~\cite{kong2024hunyuanvideo} by 1.91$\times$ and 2.57$\times$, respectively, while preserving state-of-the-art quality. Even for image generation tasks with lower inherent sparsity, \ourmethod{} outperforms existing methods in both efficiency and quality. 
Our contributions are summarized as follows:

1. We propose a novel piecewise sparse attention that enables full attention span with sub-quadratic complexity. Through a unified exact-or-approximate execution, it resolves the critical dilemma between accuracy and efficiency.

2. We develop a hybrid-order approximation scheme that boosts accuracy with negligible cost. Additionally, we derive a covariance-aware routing strategy from error analysis, which effectively minimizes approximation divergence.

3. Experiments demonstrate that \ourmethod{} achieves SOTA quality and efficiency across diverse tasks, setting a new sparse attention paradigm for efficient visual generation.

\begin{figure}
    \centering
    \input{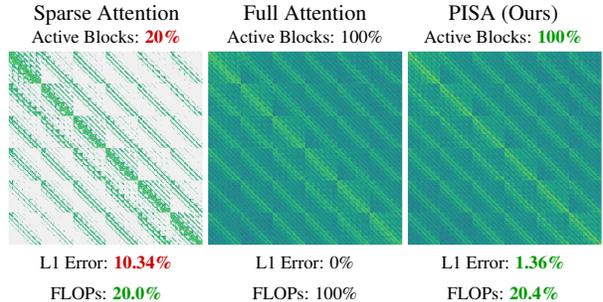}
    \vspace{-4pt}
    \caption{\textbf{Visualization of attention patterns on Wan2.1-1.3B.} \ourmethod{} achieves 100\% effective block coverage similar to full attention. This near-lossless approximation with only negligible computational overhead relative to standard sparse attention.}
    \vspace{-8pt}
    \label{fig:attn_score}
\end{figure}
\section{Related Work}
\label{sec:related}

\paragraph{Block Sparse Attention.}
To address the quadratic complexity of standard attention, sparse attention~\cite{zhang2025spargeattn,zhang2025vsa,xisparse,yang2025sparse,li2025radial,wu2025vmoba} limits computation to a subset of critical key-value blocks via static priors or dynamic routing. However, current methods simply discard the unselected blocks, which inevitably exacerbates output error and degrades performance at a high sparsity ratio. In contrast, \textbf{\ourmethod{}} guarantees a strictly lower error bound than standard sparse attention by approximating the unselected blocks instead of dropping them, enabling superior performance.

\paragraph{Bidirectional Linear Attention.}
Linear attention~\cite{katharopoulos2020transformers} achieves linear complexity by replacing the exponential kernel with feature mappings. In the vision domain, existing bidirectional methods primarily focus on modifying these feature mappings~\cite{liu2024linfusion,mengpolaformer,han2023flatten}, yet their core formulation remains consistent with the canonical framework. Notably, Taylor-based approaches~\cite{arora2024simple} typically expand around zero for the entire sequence. Although kernel tricks~\cite{gelada2025scaling} can yield high-order approximations, they incur severe dimension explosion and fail to preserve the pre-trained attention distribution. Consequently, these methods require computationally expensive retraining.

\paragraph{Native Hybrid Attention.}
Recognizing the limitations of pure sparse or linear approaches, recent works have explored hybrid architectures. Methods like SLA~\cite{zhang2025sla} and NHA~\cite{du2025native} attempt to combine sparse (or sliding window) attention with linear attention. However, these methods rely on an additive strategy that directly sums the outputs of different branches. This formulation disrupts the intrinsic normalization of attention weights. Consequently, they suffer from distribution shifts that prevent the inheritance of pre-trained weights, necessitating expensive fine-tuning. In contrast, \textbf{\ourmethod{}} applies block-wise Taylor expansion to both the normalization numerator and denominator. By natively mixing exact and approximate terms under a unified softmax framework instead of simply adding outputs, our method preserves the intrinsic distribution and enables superior training-free performance.
\section{Methodology}
\label{sec:method}

\subsection{Preliminaries}
Given an input sequence $\boldsymbol{X} \in \mathbb{R}^{L \times d}$, where $L$ is the length and $d$ is the feature dimension, the query, key, and value $\boldsymbol{Q}, \boldsymbol{K}, \boldsymbol{V} \in \mathbb{R}^{L \times d}$ are derived from $\boldsymbol{X}$ via learnable linear projections. The output $\boldsymbol{O} \in \mathbb{R}^{L \times d}$ of attention is:
\begin{equation}
    \boldsymbol{O} = \text{Softmax}\left( \frac{\boldsymbol{Q}\boldsymbol{K}^\top }{\sqrt{d}} \right)\boldsymbol{V}.
\end{equation}
FlashAttention~\cite{dao2022flashattention} introduces \textit{online softmax} algorithm that avoids materializing attention scores in high bandwidth memory (HBM), significantly reducing memory access overhead. However, it still has quadratic complexity.

To mitigate this, sparse attention restrict computation to a subset of critical key-value blocks, which is formulated as:
\begin{equation}
    \boldsymbol{O} = \text{Softmax}\left(\frac{\boldsymbol{Q}\boldsymbol{K}^\top }{\sqrt{d}} + \boldsymbol{M}\right)\boldsymbol{V}.
\end{equation}
Here, $\boldsymbol{M} \in \{0,-\infty\}^{L \times L}$ denotes a mask, where entries of $-\infty$ indicate the corresponding key-value pairs are ignored. 

\begin{figure}
    \centering
    \fcolorbox{black!30}{white}{%
        \begin{minipage}{0.95\linewidth} 
            \centering
            \begin{subfigure}[b]{\linewidth}
                \centering
                \includegraphics[width=\linewidth]{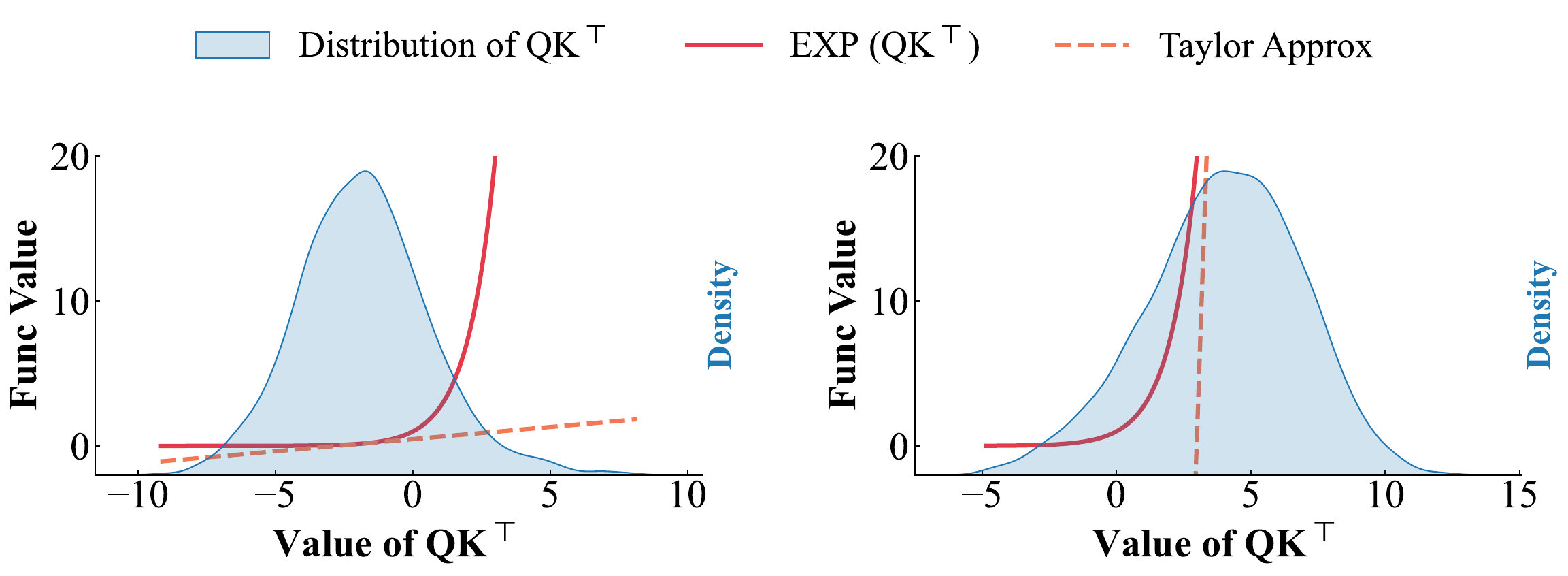}
            \end{subfigure}%
            \vspace{0.5em} 
            \begin{subfigure}[b]{\linewidth}
                \centering
                \includegraphics[width=\linewidth]{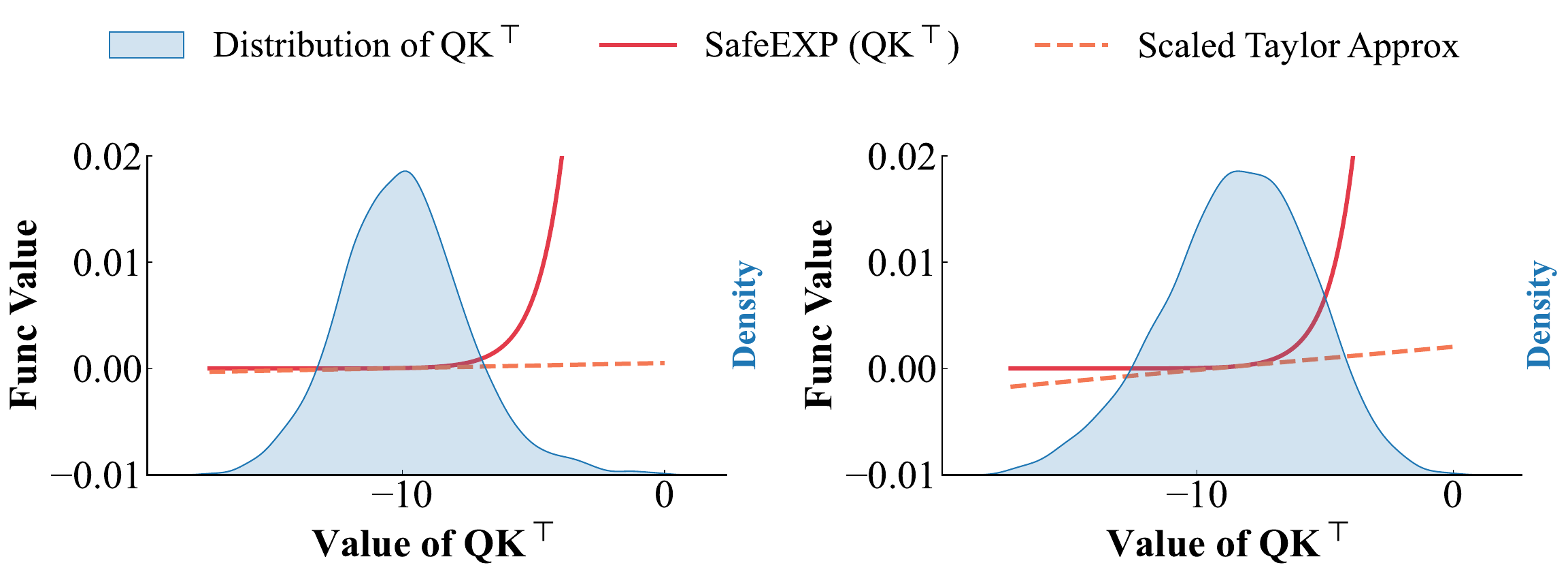}
            \end{subfigure}%
        \end{minipage}%
    }
    \caption{\textbf{Visualization of pre-softmax attention scores ($\boldsymbol{Q}\boldsymbol{K}^\top$) in Wan2.1-1.3B}. The block-wise scores exhibit a symmetric bell-shaped distribution. uncritical blocks (\textit{Left}) cluster in negative regions where the 1st-order Taylor expansion is highly accurate, whereas important blocks (\textit{Right}) diverge. This property remains robust under ``Safe-Exp'' shift for numerical stability (\textit{Bottom}).}
    \label{fig:dist_attn_score}
    \vspace{-10pt}
\end{figure}

However, directly discarding blocks inevitably causes the output to deviate sharply from the original attention distribution. This limitation motivated us to design \textit{a fast and accurate sparse attention mechanism capable of accelerating inference in a training-free manner}.

\vspace{-6pt}
\paragraph{Key Insights.}
To identify a superior alternative to the ``keep-or-drop'' strategy, we analyzed the statistical properties of pre-trained models, yielding two key insights:

\noindent\textit{(1) Pre-softmax scores of uncritical blocks exhibit a symmetric distribution centered around zero or negative values}, rendering them highly amenable to approximation via mean-centered Taylor expansion, as illustrated in Fig.~\ref{fig:dist_attn_score}.

\noindent\textit{(2) Normalization Consistency.}
Existing hybrid methods rely on an additive strategy ($\boldsymbol{O}_{sparse} + \boldsymbol{O}_{linear}$) that violates the intrinsic weighted sum rule. To ensure training-free compatibility, the approximation must be integrated \textit{internally} into the softmax numerator and denominator.

\subsection{Piecewise Sparse Attention}
Based on these insights, we propose Piecewise Sparse Attention (\ourmethod{}), which unifies exact sparse computation with block-wise approximation directly within the online softmax. The following subsection outline our framework.
\vspace{-6pt}
\paragraph{Formulation.} 
We partition the query, key, value and output matrices ($\boldsymbol{Q}, \boldsymbol{K}, \boldsymbol{V}, \boldsymbol{O}$) into blocks of size $B$. 
To streamline notation, we express the query and output vectors using a global index. 
Let $\boldsymbol{q}_t := \boldsymbol{Q}_{iB+m} \in \mathbb{R}^{1 \times d}$ (where $t=iB+m$) denote the query vector at global index $t$, which is the $m$-th row vector belongs to the $i$-th $\boldsymbol{Q}$ block. 
For the key and value sides, we preserve the block-internal structure: let $\boldsymbol{k}_{j,n}$ and $\boldsymbol{v}_{j,n}$ denote the $n$-th row vectors of the $j$-th key and value blocks, respectively (where $1 \le n \le B$).

For an arbitrary query $\boldsymbol{q}_t$, we partition the key-value block indices into a sparse selected set $\mathcal{S}_i$ (computed exactly) and a long-tail unselected set $\mathcal{U}_i$. 
Instead of discarding the unselected blocks as in standard sparse attention, we approximate the contribution from each block $j \in \mathcal{U}_i$ via \textit{First-Order Taylor expansion} centered at the block centroid $\alpha_{t,j} := \exp(\boldsymbol{q}_t \bar{\boldsymbol{k}}_j^\top)$, where $\bar{\boldsymbol{k}}_j := \frac{1}{B}\sum_{n=1}^B \boldsymbol{k}_{j,n}$. 
The output vector $\boldsymbol{o}_{t} \in \mathbb{R}^{1 \times d}$ is derived by normalizing the weighted value aggregation (omit scale factor for briefly):
\begin{align}
    \boldsymbol{o}_{t} :=& \frac{ \mathcal{N}_{t} }{ \mathcal{D}_{t} }, \quad \text{where} \notag \\
    \mathcal{D}_{t} :=& \underbrace{\sum_{j \in \mathcal{S}_i} \sum_{n=1}^B \exp({\boldsymbol{q}_{t} \boldsymbol{k}_{j,n}^\top})}_{\text{Exact Sparse Term}} + \underbrace{\sum_{j \in \mathcal{U}_i} B \cdot \exp({\boldsymbol{q}_{t} \bar{\boldsymbol{k}}_j^\top})}_{\text{Block-wise Approx}}, \label{eq:block-1st-d}\\
    \mathcal{N}_{t} :=& \underbrace{\sum_{j \in \mathcal{S}_i} \sum_{n=1}^B \exp({\boldsymbol{q}_{t} \boldsymbol{k}_{j,n}^\top)} \boldsymbol{v}_{j,n}}_{\text{Exact Sparse Term}} \label{eq:block-1st-ns}\\
    +& \underbrace{\sum_{j \in \mathcal{U}_i} \exp({\boldsymbol{q}_{t} \bar{\boldsymbol{k}}_j^\top)} \left( \sum_{n=1}^B \boldsymbol{v}_{j,n} \right)}_{\text{Block-wise Zeroth-order Approx}} \label{eq:block-1st-n0}\\
    +& \underbrace{\sum_{j \in \mathcal{U}_i} \exp(\boldsymbol{q}_{t} \bar{\boldsymbol{k}}_j^\top) \left( \boldsymbol{q}_{t} \sum_{n=1}^B (\boldsymbol{k}_{j,n} - \bar{\boldsymbol{k}}_j)^\top \boldsymbol{v}_{j,n} \right) }_{\text{Block-wise First-order Approx}}.\label{eq:block-1st-n1}
\end{align}
Note that the first-order term in the denominator $\mathcal{D}_t$ cancels out because $\sum_{n=1}^B (\boldsymbol{k}_{j,n} - \bar{\boldsymbol{k}}_j)^\top = 0$. 

Since queries within the same block share identical block masking patterns, they can naturally generalize to the full block via matrix operations. 

\vspace{-6pt}
\paragraph{Practical Challenge.}
The term~\eqref{eq:block-1st-ns} corresponds to standard block sparse attention, which can be computed efficiently. Similarly, term~\eqref{eq:block-1st-n0} can be efficiently computed via matrix multiplication (GEMM) by grouping pre-computed $\bar{\boldsymbol{k}}_j$ and $\sum_{n=1}^B \boldsymbol{v}_{j,n}$ into sub-blocks.
However, for term~\eqref{eq:block-1st-n1}, although it theoretically exhibits linear complexity, its practical implementation is severely bottlenecked by memory access. Computing this first-order term necessitates handling distinct matrices $\sum_{n=1}^B (\boldsymbol{k}_{j,n} - \bar{\boldsymbol{k}}_j)^\top \boldsymbol{v}_{j,n} \in \mathbb{R}^{d \times d}$ for each block $j \in \mathcal{U}_i$, weighted by block-specific scalar. 
Whether these matrices are pre-computed and loaded from HBM or computed on-the-fly in SRAM, the process results in a \textit{memory-bound} operation with \textit{low arithmetic intensity}, rendering the theoretical speedup unattainable in practice.

\begin{figure*}[t!]
    \centering
    \includegraphics[width=\linewidth]{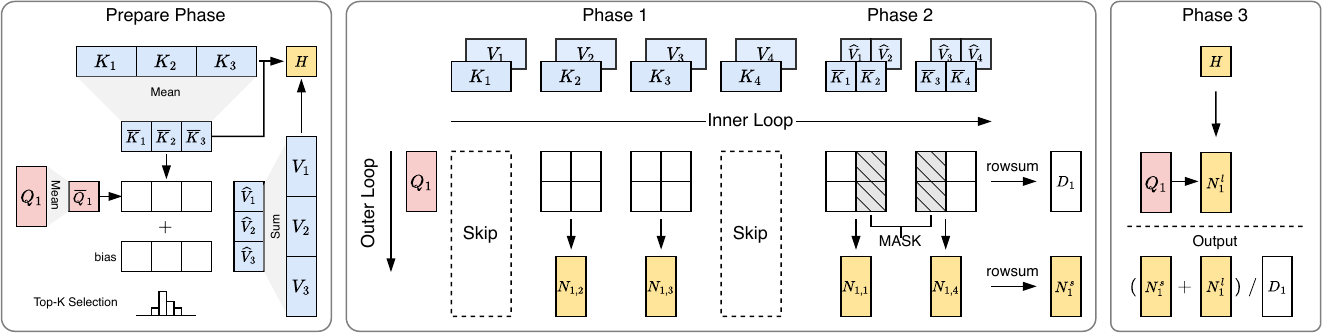}
    \vspace{-1.5em}
    \caption{\textbf{The algorithm pipeline of \ourmethod{}.} Prepare Phase: We pre-compute block-wise mean of the queries and keys ($\overline{Q},\overline{K}$), block-wise sum of value ($\hat{V}$) and the global $H$ in a single pass. A block-wise Top-K selection identifies critical blocks. \textbf{Fused Attention Kernel:} The kernel dynamically switches execution paths: selected blocks (e.g., indices 2, 3) undergo exact computation (\textit{Phase 1}), while unselected blocks (e.g., indices 1, 4) are approximated using block-wise zeroth-order expansion (\textit{Phase 2}). In \textit{Phase 3}, the $H$ is applied to inject global first-order approximation. This design allows loading the global correction term once, avoiding memory-bound streaming.}
  \label{fig:ours_pipeline}
  \vspace{-8pt}
\end{figure*}

\subsection{Hybrid Approximation}
To resolve this conflict between theoretical complexity and hardware efficiency, we propose two solutions.

\paragraph{Global First-Order Correction.}
We propose a hybrid-order approximation in which the first-order term is formulated globally across all unselected blocks, rather than block-wise. This allows all blocks to share a unified scale factor $\beta_t$, thereby eliminating the need for weighted summation. Let $\bar{\boldsymbol{k}}_t$ denote the global centroid of keys across all blocks in $\mathcal{U}_i$. The term~\eqref{eq:block-1st-n1} can be rewritten as:
\begin{equation}
    \beta_t \boldsymbol{q}_t \sum_{j\in \mathcal{U}_i} \sum_{n=1}^B (\boldsymbol{k}_{j,n} - \bar{\boldsymbol{k}}_t)^\top \boldsymbol{v}_{j,n}.
\end{equation}
Determining the expansion coefficient $\beta_t$ presents a critical challenge. While a standard first-order Taylor expansion at the global centroid $\bar{\boldsymbol{k}}_t$ would suggest $\beta_t = \exp(\boldsymbol{q}_t \bar{\boldsymbol{k}}_{t}^\top)$, Jensen's Inequality indicates that this approach severely underestimates the slope of the global first-order term, rendering the correction ineffective. To address this, we define $\beta_t$ using the mean of $\exp(\boldsymbol{q}_t \bar{\boldsymbol{k}}_j^\top)$ effectively employing an average slope to estimate the magnitude of the first-order correction for all unselected blocks. Let $\boldsymbol{H}_j := \sum_{n=1}^B (\boldsymbol{k}_{j,n} - \bar{\boldsymbol{k}}_j)^\top \boldsymbol{v}_{j,n} \in \mathbb{R}^{d \times d}$. The term~\eqref{eq:block-1st-n1} can be rewritten as:
\begin{equation}
    \frac{1}{|\mathcal{U}_i|} \left( \sum_{j \in \mathcal{U}_i} \exp(\boldsymbol{q}_t \bar{\boldsymbol{k}}_j^\top) \right) \left( \boldsymbol{q}_t \sum_{j\in \mathcal{U}_i} \boldsymbol{H}_j \right).
\end{equation}
Furthermore, by associating the normalization factor $1/|\mathcal{U}_i|$ with the summation of $\boldsymbol{H}_j$, we essentially compute the mean of $\boldsymbol{H}_j$ over the unselected blocks. Observing that the set of unselected blocks $\mathcal{U}_i$ typically constitutes the vast majority of the total blocks, we approximate the query-dependent mean over $\mathcal{U}_i$ using a query-independent global statistic $\bar{\boldsymbol{H}} = \frac{1}{N}\sum_{j=1}^N \boldsymbol{H}_j$, where $N$ is number of blocks. This global statistic can be precomputed via a single pass. Consequently, we arrive at a computationally efficient formulation:
\begin{align}
    \mathcal{N}_{t} :=& \underbrace{\sum_{j \in \mathcal{S}_i} \sum_{n=1}^B \exp({\boldsymbol{q}_{t} \boldsymbol{k}_{j,n}^\top)} \boldsymbol{v}_{j,n}}_{\text{Exact Sparse Term}} \\
    +& \underbrace{\sum_{j \in \mathcal{U}_i} \alpha_{t,j} \left( \sum_{n=1}^B \boldsymbol{v}_{j,n} \right)}_{\text{Block-wise 0th-order}}
    + \underbrace{\boldsymbol{q}_t \bar{\boldsymbol{H}} \sum_{j \in \mathcal{U}_i} \alpha_{t,j} }_{\text{Global 1st-order}}.
\end{align}

\paragraph{Error Analysis.} As proved in Theorem~\ref{thm:main_error}, the error induced by this replacement is bounded by the product of the tail probability mass and the variance of block matrices. Since $\alpha_{t,j}$ is small in the unselected set, the approximation error remains controlled.

\begin{theorem}[Error Analysis of Global First-Order Approximation]
\label{thm:main_error}
Following our previous notation. 
Assume there exists a constant $C_q > 0$, such that the query norm is bounded, i.e., $\|\boldsymbol q_t\|_2\le C_q$.

Let $\boldsymbol o_t=\frac{\mathcal N_t}{\mathcal D_t}$ be the attention output computed using the exact block-wise first-order approximation 
and let $\tilde{\boldsymbol o}_t$ be the output computed after replacing
$\sum_{j\in\mathcal U_i}\alpha_{t,j}\boldsymbol H_j$ by $(\sum_{j\in\mathcal U_i}\alpha_{t,j})\bar{\boldsymbol H}$. 
Define
\(
\tau_t := \sum_{j\in\mathcal U_i}\sum_{n=1}^B \exp(\boldsymbol q_t\boldsymbol k_{j,n}^\top)
\). 
Let
\(
M := \max_{j\in\mathcal U_i}\|\boldsymbol H_j-\bar{\boldsymbol H}\|_2.
\)

If we denote the tail fraction $\rho_t := \tau_t/\mathcal D_t \in(0,1)$, then
\[
\|\tilde{\boldsymbol o}_t-\boldsymbol o_t\|_2 \le C_q\;M\;\frac{\rho_t}{B}.
\]
\end{theorem}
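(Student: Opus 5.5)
The plan is to exploit that the replacement only touches the numerator. The denominator $\mathcal D_t$ is the same for $\boldsymbol o_t$ and $\tilde{\boldsymbol o}_t$: the first-order contribution to $\mathcal D_t$ vanishes because $\sum_{n}(\boldsymbol k_{j,n}-\bar{\boldsymbol k}_j)=0$, and the global substitution acts only on the first-order numerator term. The exact sparse term and the block-wise zeroth-order term of $\mathcal N_t$ are also identical in both. Subtracting therefore gives
\[
\tilde{\boldsymbol o}_t-\boldsymbol o_t=\frac{\boldsymbol q_t\sum_{j\in\mathcal U_i}\alpha_{t,j}\bigl(\bar{\boldsymbol H}-\boldsymbol H_j\bigr)}{\mathcal D_t}.
\]
Here $\mathcal D_t>0$, since it is a sum of exponentials.

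Next I bound the numerator in norm. I use $\|\boldsymbol q_t\boldsymbol A\|_2\le\|\boldsymbol q_t\|_2\|\boldsymbol A\|_2$ for a row vector times a matrix with the spectral norm. Then the triangle inequality and $\alpha_{t,j}>0$ give
\[
\Bigl\|\boldsymbol q_t\sum_{j}\alpha_{t,j}(\bar{\boldsymbol H}-\boldsymbol H_j)\Bigr\|_2\le C_q\sum_{j\in\mathcal U_i}\alpha_{t,j}\|\boldsymbol H_j-\bar{\boldsymbol H}\|_2\le C_q M\sum_{j\in\mathcal U_i}\alpha_{t,j}.
\]

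The remaining, and only non-mechanical, step is to relate $\sum_j\alpha_{t,j}$ to the tail mass $\tau_t$. For this I apply Jensen's inequality to the convex function $\exp$ with the uniform average over the $B$ keys of block $j$. Since $\boldsymbol q_t\bar{\boldsymbol k}_j^\top=\frac1B\sum_n\boldsymbol q_t\boldsymbol k_{j,n}^\top$ is that average, we get
\[
\alpha_{t,j}=\exp\Bigl(\tfrac1B\textstyle\sum_n\boldsymbol q_t\boldsymbol k_{j,n}^\top\Bigr)\le\tfrac1B\sum_{n=1}^B\exp(\boldsymbol q_t\boldsymbol k_{j,n}^\top).
\]
Summing over $j\in\mathcal U_i$ yields $\sum_j\alpha_{t,j}\le\tau_t/B$.

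Combining the two bounds gives $\|\tilde{\boldsymbol o}_t-\boldsymbol o_t\|_2\le C_qM\,\tau_t/(B\,\mathcal D_t)=C_qM\rho_t/B$. One bookkeeping point needs care: $\rho_t$ is defined with the approximate denominator $\mathcal D_t$, not the exact full-attention denominator. The argument uses exactly that $\mathcal D_t$, so no comparison between the two denominators is required. I expect no real obstacle beyond the Jensen step and checking this denominator convention.
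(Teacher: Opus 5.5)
Your proof is correct and matches the paper's argument step for step: the shared denominator makes the error $-\boldsymbol q_t\boldsymbol R_t/\mathcal D_t$; submultiplicativity and the triangle inequality give $C_qM\sum_{j\in\mathcal U_i}\alpha_{t,j}/\mathcal D_t$ (the paper packages this step as its residual-norm lemma); and block-wise Jensen gives $\sum_{j\in\mathcal U_i}\alpha_{t,j}\le\tau_t/B$. Your denominator remark is also sound: the paper's appendix takes $\mathcal D_t$ to be the full-attention denominator rather than the approximate one, but, as you note, the argument only needs the same $\mathcal D_t$ in both outputs and in $\rho_t$.
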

\vspace{-6pt}
Proofs and further discussions are provided in Appendix~\ref{appendix:error_bound}.

\vspace{-4pt}
\paragraph{Covariance-Aware Block Selection.}
Practically, to guarantee small approximation error it suffices to ensure either the tail fraction $\rho_t$ is small, or the per-block heterogeneity $M$ is small. 
Based on the Theorem~\ref{thm:main_error}, we propose a \textit{Covariance-Aware Top-k Strategy} that incorporates the norm of the block covariance matrix as a prior for importance routing. Let $M_j := \|\boldsymbol{H}_j - \bar{\boldsymbol{H}}\|_2$, the selection score for block $j$ with respect to query $\boldsymbol{q}_t$ is defined as:
\begin{equation}\label{eq:topk}
    \text{Score}_{t,j} = \text{Softmax}\left( \frac{\boldsymbol{q}_t \bar{\boldsymbol{k}}_j^\top}{\sqrt{d}} + \log\left(M_j + \epsilon\right) \right),
\end{equation}
where $\epsilon > 0$ is a small constant for numerical stability. The derivation of Eq.~\eqref{eq:topk} is further elaborated on in the Appendix~\ref{appendix:topk}. This strategy ensures that blocks with either high semantic relevance are preserved in the exact set $\mathcal{S}_i$, while the smoother blocks are delegated to the Taylor expansion.

\subsection{Hardware-aware Kernel Implementation}
\label{sec:implementation}

We design a fused kernel which efficiently interleaves exact computation with approximate scanning in online softmax. 

\begin{algorithm}[H]
\small
\renewcommand{\baselinestretch}{1.1}
\selectfont
\caption{\ourmethod{} Forward Pass}
\label{alg:span_attention}

\begin{algorithmic}[1]
\renewcommand{\algorithmicrequire}{\textbf{Input:}}
\renewcommand{\algorithmicensure}{\textbf{Output:}}

\REQUIRE $\boldsymbol{Q}, \boldsymbol{K}, \boldsymbol{V} \in \mathbb{R}^{L \times d}$, block size $B$ and $C$, sparsity $r$

\STATE Compute block means $\bar{\boldsymbol{Q}}, \bar{\boldsymbol{K}}$, block sum $\hat{\boldsymbol{V}}$ and global $\boldsymbol{H}$.
\STATE Compute block indices $\mathcal{S}_i$ via Eq.~\eqref{eq:topk} for each query block $i$.
\STATE Partition $\bar{\boldsymbol{K}}, \hat{\boldsymbol{V}}$ into groups $\{\bar{\boldsymbol{K}}_{[g]}, \bar{\boldsymbol{V}}_{[g]}\}$ of size $C$.

\FOR{$i \leftarrow 1$ \textbf{to} $N = \lceil L/B \rceil$}
    \STATE Init $\boldsymbol{O}_i \leftarrow \mathbf{0}$, $\boldsymbol{\ell}_i \leftarrow \mathbf{0}$, $\boldsymbol{\ell}^{\text{tail}}_i \leftarrow \mathbf{0}$, $\boldsymbol{m}_i \leftarrow -\infty$.
    \STATE Load $\boldsymbol{Q}_i$ into SRAM.

    \ForComment{$j \in \mathcal{S}_i$}{\hfill {\textit{Phase 1}}}
        \STATE Load $\boldsymbol{K}_j$, $\boldsymbol{V}_j$ into SRAM.
        \STATE On-chip: $\boldsymbol{S}_{ij} = \boldsymbol{Q}_i \boldsymbol{K}_j^\top$ \hfill \textit{\textcolor{gray}{\scriptsize Omit the safe softmax}}
        \STATE On-chip: $\boldsymbol{P}_{ij} = \exp(\boldsymbol{S}_{ij})$ \hfill \textit{\textcolor{gray}{\scriptsize procedure for brevity}}
        \STATE On-chip: $\boldsymbol{\ell}_i \leftarrow \boldsymbol{\ell}_i + \operatorname{rowsum}(\boldsymbol{P}_{ij})$
        \STATE On-chip: $\boldsymbol{O}_i \leftarrow \boldsymbol{O}_i + \boldsymbol{P}_{ij} \boldsymbol{V}_j$
    \ENDFOR

    \ForComment{$g \leftarrow 1$ \textbf{to} $\lceil N/C \rceil$}{Phase 2}
        \STATE Load $\bar{\boldsymbol{K}}_{[g]}$, $\bar{\boldsymbol{V}}_{[g]}$ into SRAM.
        \STATE On-chip: $\boldsymbol{S}_{ig} = \boldsymbol{Q}_i \bar{\boldsymbol{K}}_{[g]}^\top$
        \STATE On-chip: $\boldsymbol{S}_{ig}[\{j \mid j \in \mathcal{S}_i\}] \leftarrow -\infty$. \hfill \textit{\textcolor{gray}{\scriptsize Column masking}}
        \STATE On-chip: $\boldsymbol{P}_{ig} = \exp(\boldsymbol{S}_{ig})$ \hfill \textit{\textcolor{gray}{\scriptsize Omit the safe softmax}}
        \STATE On-chip: $\boldsymbol{\ell}_i \leftarrow \boldsymbol{\ell}_i + \operatorname{rowsum}(\boldsymbol{P}_{ig})$
        \STATE On-chip: $\boldsymbol{\ell}^{\text{tail}}_i \leftarrow \boldsymbol{\ell}^{\text{tail}}_i + \operatorname{rowsum}(\boldsymbol{P}_{ig})$.
        \STATE $\boldsymbol{O}_i \leftarrow \boldsymbol{O}_i + \boldsymbol{P}_{ig} \bar{\boldsymbol{V}}_{[g]}$
    \ENDFOR

    \STATE On-chip: $\boldsymbol{R}_i = \text{diag} (\boldsymbol{\ell}^{\text{tail}}_i) (\boldsymbol{Q}_i \boldsymbol{H}) \cdot L^{-1}$ \hfill {\textit{Phase 3}}
    \STATE On-chip: $\boldsymbol{O}_i \leftarrow \operatorname{diag}(\boldsymbol{\ell}_i^{-1}) (\boldsymbol{O}_i + \boldsymbol{R}_i)$
\ENDFOR
\ENSURE $\boldsymbol{O} = \{\boldsymbol{O}_i\}_{i=1}^M \in \mathbb{R}^{L \times d}$.
\end{algorithmic}
\end{algorithm}
\vspace{-10pt}

We illustrate the pipeline in Fig.~\ref{fig:ours_pipeline} and present the pseudocode in Algorithm~\ref{alg:span_attention}.
In \textit{Prepare Phase}, we pre-computes block centroids and the global statistic $\bar{\boldsymbol{H}}$ while selecting critical blocks $\mathcal{S}_i$.
In \textit{Phase 1}, we load the selected blocks $j \in \mathcal{S}_i$ into SRAM to perform exact attention. Then in \textit{Phase 2}, we scan aggregated block vectors in groups using coalesced memory access. A dynamic mask excludes selected blocks, enabling high-throughput for the tail mass. Finally, we inject the global approximation in \textit{Phase 3}. This operation incurs negligible cost yet effectively recovers first-order gradient information.

\begin{figure*}[ht!]
  \centering
  \fcolorbox{black!30}{white}{
  \begin{subfigure}[b]{0.48\linewidth}
    \centering
    \includegraphics[width=0.95\linewidth]{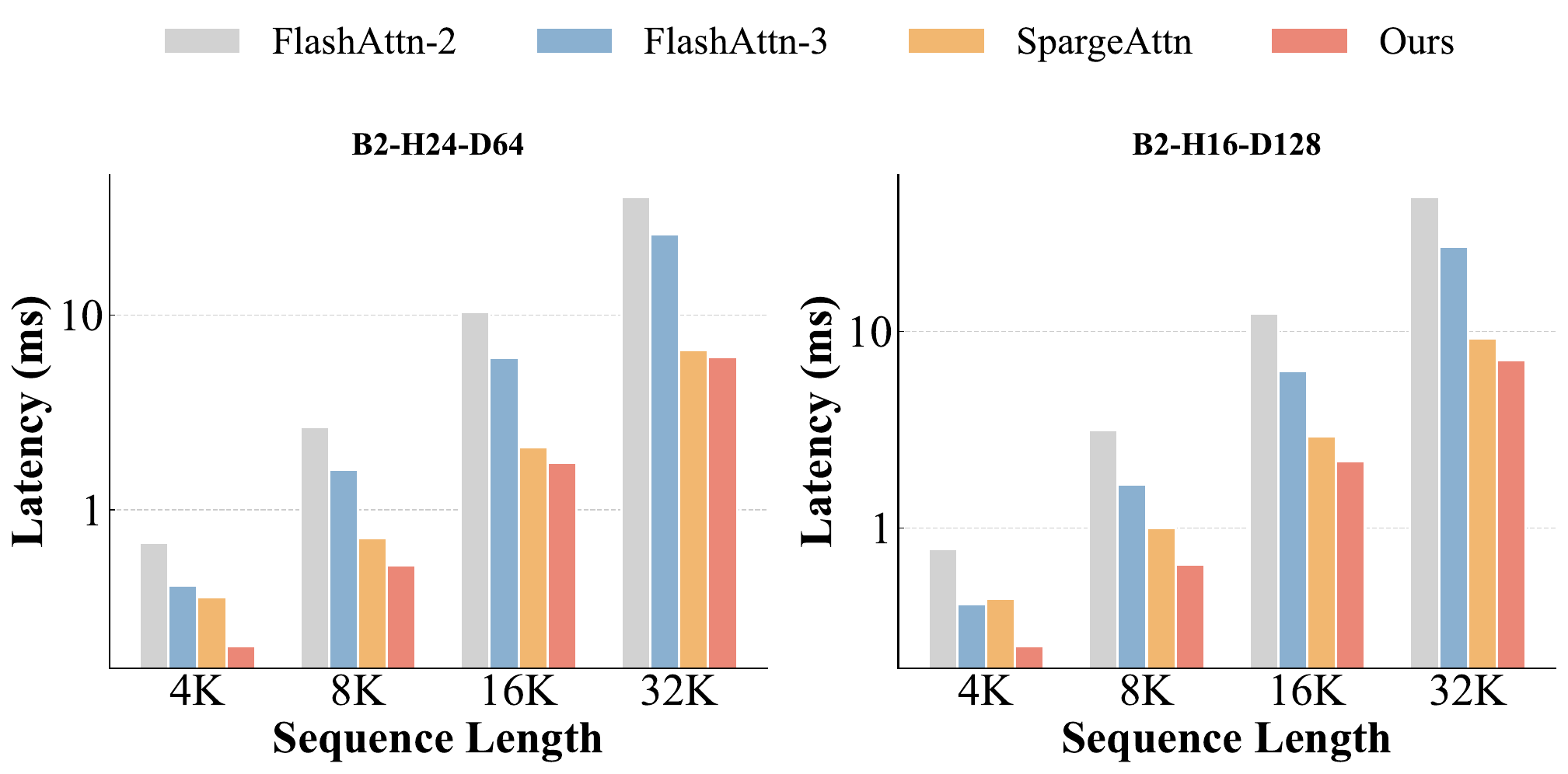}
    \caption{Latency Analysis}
    \label{fig:efficiency_latency}
  \end{subfigure} 
  \begin{subfigure}[b]{0.4825\textwidth}
    \centering
    \includegraphics[width=0.95\linewidth]{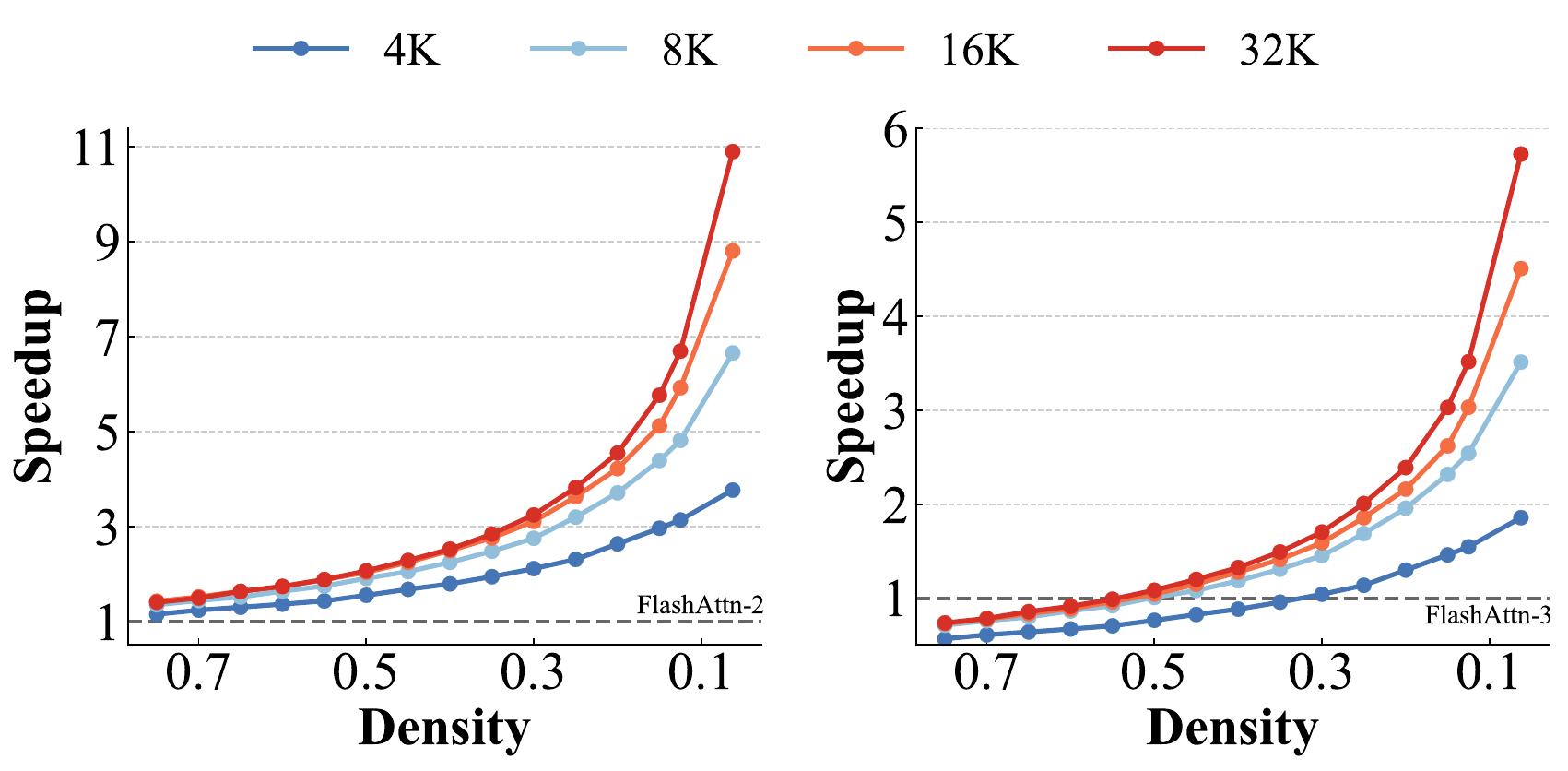}
    \caption{Speedup Analysis}
    \label{fig:efficiency_speedup}
  \end{subfigure}
  }
  \caption{\textbf{Kernel efficiency profile.} 
  (\textit{a}) Latency comparison across sequence lengths at 12.5\% density (87.5\% sparsity) under two mainstream configurations with notation B-H-D (\textit{batch\_size}, \textit{num\_heads}, \textit{head\_dim}). 
  (\textbf{b}) Relative speedup against FlashAttn-2/3 across varying densities and sequence lengths under the B2-H16-D128 configuration. The dashed line indicates the baseline performance.}
  \vspace{-8pt}
  \label{fig:main_efficiency}
\end{figure*}

\newpage
\section{Experiments}
\label{sec:exp}
\subsection{Experimental Setup}\label{sec:exp_steup}
\paragraph{Models.} We evaluate \ourmethod{} on both video and image generation tasks. For video generation, we employ Wan2.1 (1.3B/14B)~\cite{wan2025} to produce videos at 480p and 720p resolutions, respectively. For image generation, we utilize SD 3.5~\cite{esser2024scaling} and FLUX.1~\cite{labs2025flux1kontextflowmatching} to generate images at a resolution of 1024$\times $1024.

\vspace{-8pt}
\paragraph{Benchmarks.} We employ VBench~\cite{huang2023vbench} to evaluate both video quality and temporal consistency. For image generation, we utilize FID~\cite{heusel2017gans} alongside a suite of human preference metrics, including ImageReward (IR)~\cite{xu2023imagereward}, HPSv2~\cite{wu2023human}, and MPS~\cite{MPS}. Furthermore, we compute SSIM, PSNR and LPIPS~\cite{zhang2018perceptual} to quantify the similarity between our method and full attention. Regarding efficiency, we note that theoretical FLOPs often fail to reflect real-world speeds. Therefore, to ensure a fair comparison of efficiency, we focus exclusively on end-to-end latency and the speedup ratio relative to full attention.

\vspace{-8pt}
\paragraph{Implementation Details.} We define sparsity as the proportion of blocks computed using approximation, analogous to the ratio of skipped blocks in standard sparse attention. Our kernel implementation uses a block size of 64$\times$64. We only employ the \textit{Covariance-Aware Block Selection} to boost performance for image generation. Following prior works~\cite{li2025radial,yang2025sparse}, we adopt a warmup strategy where early layers and inference steps retain dense. Detailed configurations are provided in the Appendix~\ref{appendix:detail}.

\vspace{-2pt}
\subsection{Kernel Efficiency Evaluation}
\vspace{-2pt}
To evaluate the efficiency of our method, we benchmark against state-of-the-art implementations. Specifically, we adopt FlashAttn-2/3 (FA)~\cite{dao2023flashattention2, shah2024flashattention} as the standard for exact full attention, and SpargeAttn~\cite{zhang2025spargeattn} as the baseline for block sparse attention. All efficiency results were profiled on NVIDIA H800 GPUs.

\vspace{-6pt}
\paragraph{Efficiency vs. Sequence Length.} As illustrated in Fig.~\ref{fig:efficiency_latency}, \ourmethod{} at a density of 12.5\% (sparsity 87.5\%) consistently outperforms FA3 and SpargeAttn. Notably, even at shorter sequence (e.g., 4K), our method maintains a speed advantage over FA3, whereas SpargeAttn exhibits performance degradation in this regime, becoming slower than FA3.

\vspace{-6pt}
\paragraph{Efficiency vs. Density.} Fig.~\ref{fig:efficiency_speedup} illustrates the speedup of \ourmethod{} relative to FA2 and FA3 across varying densities and sequence lengths. Notably, even when the density exceeds 70\% (less than 30\% sparsity), our method consistently outperforms FA2 across all sequence lengths. Regarding FA3, our method surpasses it on longer sequences ($>$8K) when the density is below 50\%, whereas for shorter sequences (4K), it outperforms FA3 provided the density is below 70\%.

\vspace{-6pt}
\paragraph{Runtime Breakdown.} We profile the runtime of distinct phases within our kernel. As illustrated in Fig.~\ref{fig:kernel_profile}, the approximation phase incurs minimal overhead, confirming that our method improves accuracy without sacrificing efficiency.

\begin{figure}
    \centering
    \fcolorbox{black!30}{white}{\includegraphics[width=0.95\linewidth]{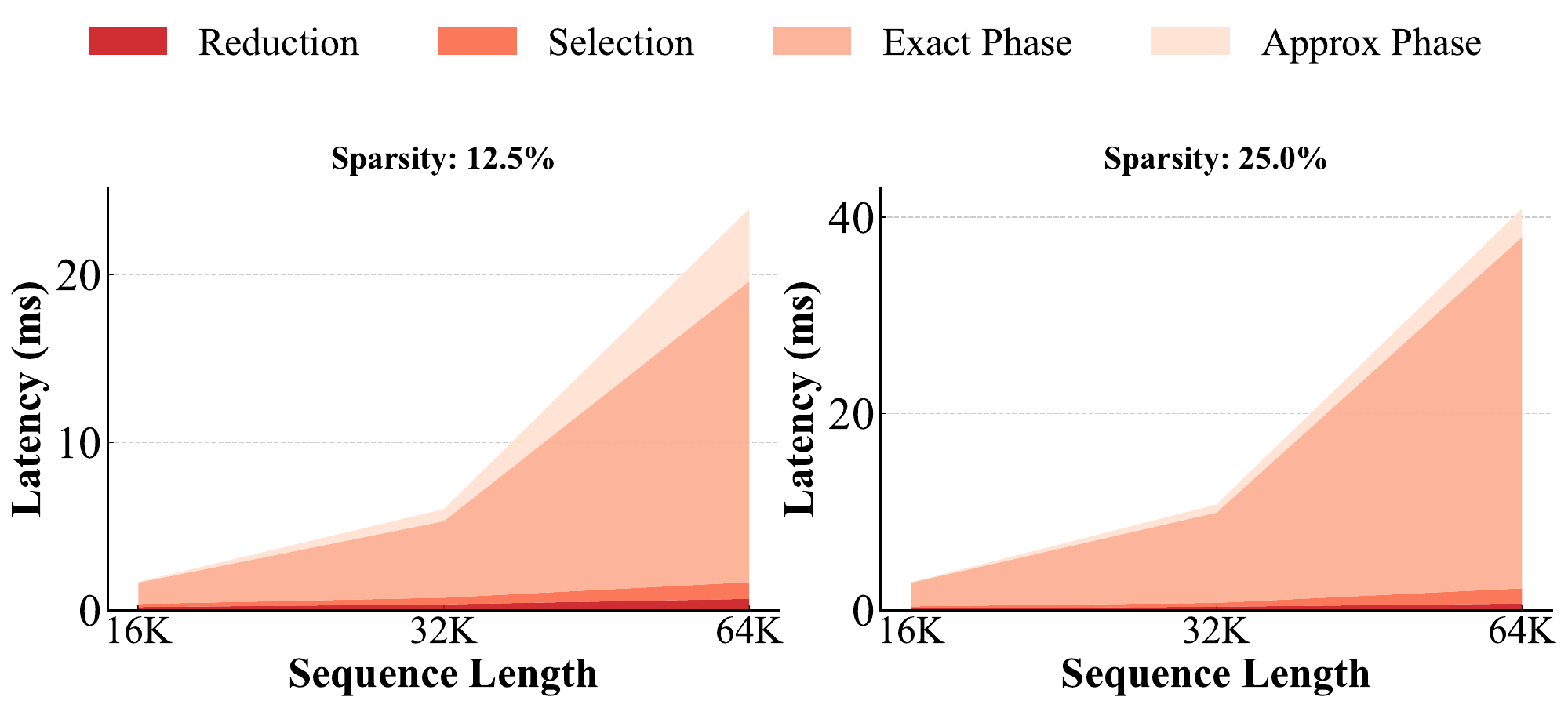}}
    \vspace{-2pt}
    \caption{\textbf{Latency breakdown of \ourmethod{}.} We profile the cumulative runtime of the four kernel phases: \textit{(1)} block reduction, \textit{(2)} block selection, \textit{(3)} exact attention, \textit{(4)} approximate attention (comprises block-wise zeroth-order and global first-order approximation).}
    \label{fig:kernel_profile}
    \vspace{-10pt}
\end{figure}

\begin{figure*}[t]
    \centering
    \input{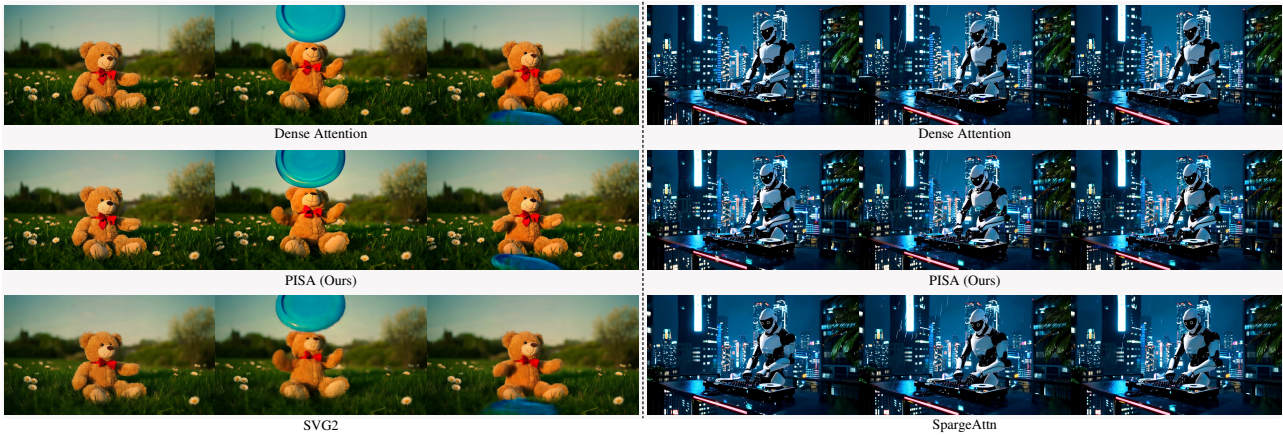}
    \vspace{-16pt}
    \caption{Video generation samples of different attention mechanisms. \textit{Left}: Hunyuan-Video 13B. \textit{Right}: Wan2.1-14B.}
    \label{fig:video_samples}
\end{figure*}

\newcolumntype{C}{>{\centering\arraybackslash}X}

\begin{table*}[ht!]
\centering
\small
\caption{Comparison of different sparse attention on video generation with warmup strategy. $\dagger$: The sparsity is derived from the actual statistical data of all samples, using the official-provided configuration for reproducing the 87.13\% sparsity reported in SVG2 paper.}
\vspace{-4pt}
\label{tab:main_video}
\begin{tabularx}{0.99\textwidth}{clC|CCC|CCC|CC}
\toprule

\multirow{2}{*}{\textbf{Model}} & \multirow{2}{*}{\textbf{Method}} & \multirow{2}{*}{\textbf{Sparsity}} & \multicolumn{3}{c|}{\textbf{Vbench}$(\%)\uparrow$} & \multicolumn{3}{c|}{\textbf{Similarity}} & \multicolumn{2}{c}{\textbf{Efficiency}} \\

 & & & \textbf{\small S.C.} & \textbf{\small I.Q.} & \textbf{\small A.Q.} & \textbf{\small SSIM}$\uparrow$ & \textbf{\small PSNR}$\uparrow$ & \textbf{\small LPIPS}$\downarrow$ & \textbf{\small Latency}$\downarrow$ & \textbf{\small Speedup}$\uparrow$ \\
\midrule

\multirow{4}{*}[-0.75em]{\shortstack{Wan2.1-1.3B\\ \\ \textit{Text-to-Video} \\ \\ \textit{480P}}} 
  & Dense & 0.00\% & 94.96 & 66.09 & 60.79 & -- & -- & -- & \hspace{\widthof{00}}98 s & 1.00$\times$ \\
  
\cmidrule{2-11}
  & Sparge & 87.5\% & 93.76 & 64.89 & 58.66 & 0.761 & 20.75 & 0.138 & \hspace{\widthof{00}}51 s & 1.92$\times$ \\[1.0pt]
  
  & SVG2$^\dagger$ & 84.4\% & 93.79 & 64.51 & 59.09 & 0.809 & 22.85 & 0.104 & \hspace{\widthof{00}}62 s & 1.58$\times$ \\[1.0pt]
  
  & \cellcolor{gray!12.5}Ours & \cellcolor{gray!12.5}87.5\% & \cellcolor{gray!12.5}94.58 & \cellcolor{gray!12.5}65.94 & \cellcolor{gray!12.5}60.03 & \cellcolor{gray!12.5}0.800 & \cellcolor{gray!12.5}22.62 & \cellcolor{gray!12.5}0.111 & 
  \cellcolor{gray!12.5}\hspace{\widthof{00}}48 s & \cellcolor{gray!12.5}2.04$\times$ \\

\midrule

\multirow{4}{*}[-0.75em]{\shortstack{Wan2.1-14B\\ \\ \textit{Text-to-Video} \\ \\ \textit{720P}}} 
  & Dense & 0.00\% & 95.98 & 67.71 & 63.08 & -- & -- & -- & 1564 s & 1.00$\times$ \\

\cmidrule{2-11}
  & Sparge & 87.5\% & 95.69 & 67.11 & 62.72 & 0.766 & 21.47 & 0.144 & \hspace{\widthof{0}}844 s & 1.85$\times$ \\[1.0pt]

  & SVG2$^\dagger$ & 80.6\% & 95.39 & 66.97 & 61.92 & 0.796 & 22.92 & 0.121 & \hspace{\widthof{0}}882 s & 1.77$\times$ \\[1.0pt]
  
  & \cellcolor{gray!12.5}Ours & \cellcolor{gray!12.5}87.5\% & \cellcolor{gray!12.5}95.80 & \cellcolor{gray!12.5}67.88 & \cellcolor{gray!12.5}63.38 & \cellcolor{gray!12.5}0.787 & \cellcolor{gray!12.5}22.69 & \cellcolor{gray!12.5}0.124 & \cellcolor{gray!12.5}\hspace{\widthof{0}}818 s &\cellcolor{gray!12.5}1.91$\times$ \\

\midrule

\multirow{4}{*}[-0.75em]{\shortstack{Hunyuan-13B\\ \\ \textit{Text-to-Video} \\ \\ \textit{720P}}} 
  & Dense & 0.00\% & 95.60 & 67.65 & 61.55 & -- & -- & -- & 1651 s & 1.00$\times$ \\
  
\cmidrule{2-11}
  & Sparge & 87.5\% & 95.38 & 67.37 & 61.35 & 0.837 & 24.85 & 0.107 & \hspace{\widthof{0}}658 s & 2.51$\times$ \\[1.0pt]

  & SVG2$^\dagger$ & 81.4\% & 94.88 & 63.10 & 58.58 & 0.848 & 26.40 & 0.109 & \hspace{\widthof{0}}649 s & 2.54$\times$ \\[1.0pt]
  
  & \cellcolor{gray!12.5}Ours & \cellcolor{gray!12.5}87.5\% & \cellcolor{gray!12.5}95.47 & \cellcolor{gray!12.5}68.16 & \cellcolor{gray!12.5}61.85 & \cellcolor{gray!12.5}0.840 & \cellcolor{gray!12.5}26.17 & \cellcolor{gray!12.5}0.106 & \cellcolor{gray!12.5}\hspace{\widthof{0}}641 s &\cellcolor{gray!12.5}2.57$\times$ \\

\bottomrule
\end{tabularx}
\vspace{-8pt}
\end{table*}

\subsection{Visual Generation Evaluation}

\paragraph{Video Generation.} We evaluate our \ourmethod{} against recent sparse attention works (e.g., SpargeAttn~\cite{zhang2025spargeattn}, SVG2~\cite{yang2025sparse}) on the Wan2.1~\cite{wan2025} and Hunyuan-Video~\cite{kong2024hunyuanvideo}.

\begin{table}[t]
\scriptsize
    \centering
    \caption{Comparison of different sparse attention on video generation without warmup strategy. $\dagger$: Same clarification as in Table~\ref{tab:main_video}.}
    \vspace{-1pt}
    \label{tab:without_warmup}
        \begin{tabularx}{0.99\linewidth}{l c c c C C C}
            \toprule
            \multirow{2}{*}{\textbf{Method}} & \multirow{2}{*}{\textbf{Sparsity}} & \multicolumn{2}{c}{\textbf{VBench(\%) $\uparrow$}} & \multicolumn{2}{c}{\textbf{Similarity}} & \multirow{2}{*}{\textbf{Speedup$\uparrow$}} \\
            \cmidrule(lr){3-4} \cmidrule(lr){5-6}
             & & \textbf{I.Q.} & \textbf{A.Q.} & \textbf{PSNR$\uparrow$} & \textbf{LPIPS$\downarrow$} & \\
            \midrule
            
            \multicolumn{7}{l}{\textit{{Wan2.1-1.3B (Text-to-Video 480P)}}} \\[0.6pt]
            Sparge & 87.5\% & 60.10 & 55.26 & 10.92 & 0.397 & 2.72$\times$ \\[0.6pt]
            SVG2$^\dagger$ & 75.8\% & 63.22 & 57.25 & 13.42 & 0.292 & 2.33$\times$ \\[0.6pt]
            \rowcolor{gray!12.5} Ours & 87.5\% & 66.08 & 60.32 & 14.16 & 0.267 & 3.06$\times$ \\
            \midrule
            
            \multicolumn{7}{l}{\textit{{Wan2.1-14B (Text-to-Video 720P)}}} \\[0.6pt]
            Sparge & 87.5\% & 60.03 & 51.45 & \hspace{\widthof{0}}9.48 & 0.520 & 2.59$\times$ \\[0.6pt]
            SVG2$^\dagger$ & 74.7\% & 58.85 & 55.72 & 10.67 & 0.489 & 2.37$\times$ \\[0.6pt]
            \rowcolor{gray!12.5} Ours & 87.5\% & 69.95 & 64.47 & 12.04 & 0.398 & 2.75$\times$ \\
            \midrule
            
            \multicolumn{7}{l}{\textit{{Hunyuan-13B (Text-to-Video 720P)}}} \\[0.6pt]
            Sparge & 87.5\% & 65.10 & 60.34 & 14.91 & 0.283 & 3.65$\times$ \\[0.6pt]
            SVG2$^\dagger$ & 81.0\% & 63.96 & 60.34 & 16.20 & 0.275 & 3.60$\times$ \\[0.6pt]
            \rowcolor{gray!12.5} Ours & 87.6\% & 67.61 & 61.65 & 18.73 & 0.264 & 3.82$\times$ \\
            
            \bottomrule
        \end{tabularx}
\vspace{-10pt}
\end{table}
\begin{table*}[!ht]
\centering
\small
\caption{Comparison of different sparse attention on text-to-image generation with warmup strategy.}
\vspace{-4pt}
\label{tab:main_img}
\begin{tabularx}{0.99\textwidth}{c l c | *{4}{>{\centering\arraybackslash}X} | *{3}{>{\centering\arraybackslash}X} | c}
\toprule
\textbf{Model} & \textbf{Method} & \textbf{Sparsity} & \textbf{FID}$\downarrow$ & \textbf{IR}$\uparrow$ & \textbf{MPS}$\uparrow$ & \textbf{HPSv2}$\uparrow$ & \textbf{SSIM}$\uparrow$ & \textbf{PSNR}$\uparrow$ & \textbf{LPIPS}$\downarrow$ & \textbf{Latency}$\downarrow$ \\
\midrule
\multirow{3}{*}{\shortstack{\\SD 3.5-M}} & Dense-Attn & \hspace{\widthof{0}}0\% & 15.48 & 0.912 & -- & 30.13 & -- & -- & -- & 3.03 s \\
  & SpargeAttn & 70\% & 19.52 & 0.465 & 32.32\% & 26.30 & 0.627 & 14.60 & 0.269 & 2.69 s \\ 
  & \cellcolor{gray!12.5}Ours & \cellcolor{gray!12.5}70\% & \cellcolor{gray!12.5}14.78 & \cellcolor{gray!12.5}0.883 & \cellcolor{gray!12.5}49.71\% & \cellcolor{gray!12.5}30.40 & \cellcolor{gray!12.5}0.790 & \cellcolor{gray!12.5}19.55 & \cellcolor{gray!12.5}0.158 & \cellcolor{gray!12.5}2.15 s \\

\midrule

\multirow{3}{*}{\shortstack{\\SD 3.5-L\\\\\textit{Turbo}}} & Dense-Attn & \hspace{\widthof{0}}0\% & 17.70 & 0.877 & -- & 30.75 & -- & -- & -- & 0.81 s \\
  & SpargeAttn & 80\% & 22.27 & 0.235 & 26.03\% & 27.15 & 0.558 & 13.88 & 0.281 & 0.71 s \\
  & \cellcolor{gray!12.5}Ours & \cellcolor{gray!12.5}85\% & \cellcolor{gray!12.5}17.56 & \cellcolor{gray!12.5}0.782 & \cellcolor{gray!12.5}43.35\% & \cellcolor{gray!12.5}29.84 & \cellcolor{gray!12.5}0.677 & \cellcolor{gray!12.5}16.64 & \cellcolor{gray!12.5}0.201 & \cellcolor{gray!12.5}0.65 s \\

\midrule

\multirow{3}{*}{\shortstack{\\FLUX.1\\ \\\textit{schnell}}} & Dense-Attn & \hspace{\widthof{0}}0\% & 15.21 & 0.911 & -- & 30.22 & -- & -- & -- & 0.82 s \\
  & SpargeAttn & 80\% & 15.74 & 0.728 & 38.69\% & 28.81 & 0.563 & 13.62 & 0.311 & 0.75 s \\
  & \cellcolor{gray!12.5}Ours & \cellcolor{gray!12.5}85\% & \cellcolor{gray!12.5}15.25 & \cellcolor{gray!12.5}0.887 & \cellcolor{gray!12.5}45.73\% & \cellcolor{gray!12.5}29.90 & \cellcolor{gray!12.5}0.625 & \cellcolor{gray!12.5}15.17 & \cellcolor{gray!12.5}0.247 & \cellcolor{gray!12.5}0.71 s \\

\midrule

\multirow{3}{*}{\shortstack{\\FLUX.1\\ \\\textit{dev}}} & Dense-Attn & \hspace{\widthof{0}}0\% & 16.35 & 0.965 & -- & 31.31 & -- & -- & -- & 8.32 s\\
  & SpargeAttn & 80\% & 19.20 & 0.906 & 48.13\% & 31.31 & 0.539 & 12.88 & 0.296 & 7.47 s \\
  & \cellcolor{gray!12.5}Ours & \cellcolor{gray!12.5}85\% & \cellcolor{gray!12.5}15.91 & \cellcolor{gray!12.5}1.001 & \cellcolor{gray!12.5}51.98\% & \cellcolor{gray!12.5}31.69 & \cellcolor{gray!12.5}0.682 & \cellcolor{gray!12.5}17.07 & \cellcolor{gray!12.5}0.241 & \cellcolor{gray!12.5}6.87 s \\
\bottomrule
\end{tabularx}
\vspace{-4pt}
\end{table*}

As demonstrated in Table~\ref{tab:main_video}, our method achieves state-of-the-art performance on the VBench benchmark, significantly outperforming existing approaches while delivering substantial speedup advantages. Remarkably, it even surpasses full attention on certain generation quality metrics. 

In contrast, although SVG2 achieves high similarity scores, it suffers from severe degradation in overall video quality and consistency, exhibiting noticeable blurring and flickering between frames. Similarly, SpargeAttn struggles with the loss of fine-grained details, whereas \ourmethod{} exhibits visual quality consistent with full attention, as shown in Fig.~\ref{fig:video_samples}.

Furthermore, unlike competing methods that deteriorate precipitously without a warmup strategy, \ourmethod{} maintains high-fidelity generation, as detailed in Table~\ref{tab:without_warmup}. This underscores the critical advantage of our efficient piecewise computation covering the full attention span.

\vspace{-6pt}
\paragraph{Image Generation.} 
We compare our method against SparseAttn on Stable Diffusion 3.5 (SD3.5)~\cite{esser2024scaling} and FLUX.1~\cite{labs2025flux1kontextflowmatching}, with results shown in Table~\ref{tab:main_img}. At similar or higher sparsity levels, our method significantly outperforms SparseAttn in FID and other human preference benchmarks, while achieving greater speedups. In terms of similarity, our method consistently surpasses SparseAttn across different models, demonstrating that our piecewise attention mechanism preserves critical structural and semantic information. Notably, on FLUX.1-dev, our method even outperforms full attention on certain metrics. Visual results are presented in Fig.~\ref{fig:title} and Appendix~\ref{appendix:samples}.

\begin{figure}[t]
    \centering
    \fcolorbox{black!30}{white}{\includegraphics[width=0.95\linewidth]{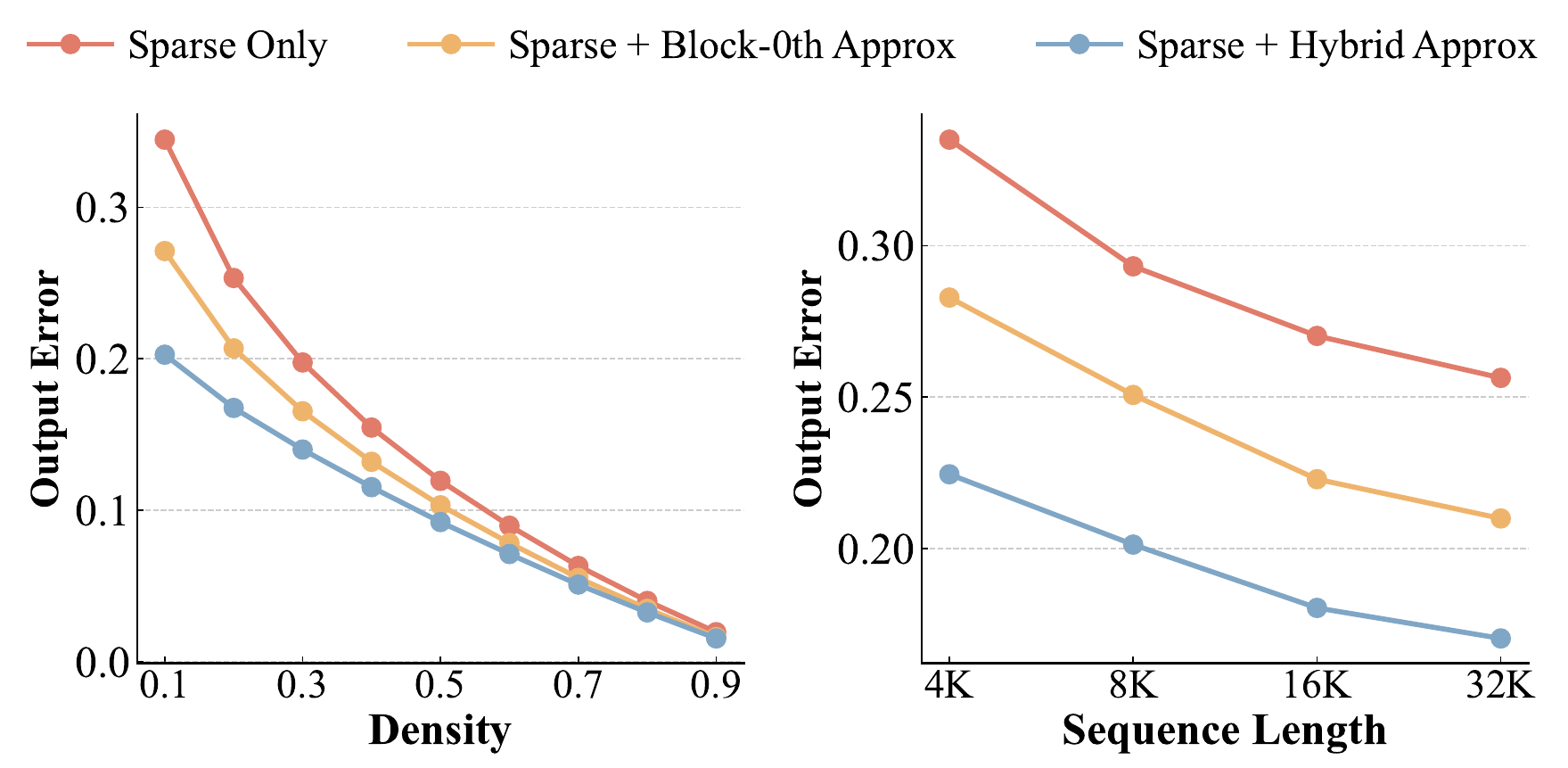}}
    \vspace{-4pt}
    \caption{\textbf{Relative error with respect to dense attention.} \textit{Left}: varying density (32K tokens). \textit{Right}: varying length (20\% density).}
    \label{fig:abla_error}
\end{figure}

\begin{table}[t]
\scriptsize
\centering
\vspace{-4pt}
\caption{{Ablation of hybrid approximation}. The suffixes \textit{``-0th''}, \textit{``-1st''}, and \textit{``-hyd''} denote block-wise 0th/1st-order, and hybrid approximations, respectively. $\dagger$: With covariance-aware selection.}
\vspace{-4pt}
\label{tab:ablation}
\begin{tabularx}{0.99\linewidth}{llCCCC}
\toprule
{\textbf{Model}} & {\textbf{Method}} & {\textbf{SSIM}$\uparrow$} & {\textbf{PSNR}$\uparrow$} &{\textbf{LPIPS}$\downarrow$} & {\textbf{Speedup$\uparrow$}} \\
\midrule
 & \ourmethod{}-\textit{0th} & 0.643 & 16.10 & 0.274 & 1.21$\times$ \\
 & \ourmethod{}-\textit{1st} & 0.679 & 17.04 & 0.246 & 0.96$\times$ \\
 & \ourmethod{}-\textit{hyd} & 0.677 & 17.01 & 0.248 & 1.24$\times$ \\
  \multirow{-4}{*}{{FLUX.1-dev}} 
 & \cellcolor{gray!12.5}\ourmethod{}-\textit{hyd}$^\dagger$ & \cellcolor{gray!12.5}0.682 & \cellcolor{gray!12.5}17.09 & \cellcolor{gray!12.5}0.241 & \cellcolor{gray!12.5}1.22$\times$ \\
\midrule
 & \ourmethod{}-\textit{0th} & 0.772 & 21.68 & 0.136 & 1.93$\times$ \\
 \multirow{-2}{*}{Wan2.1-14B} 
 & \cellcolor{gray!12.5}\ourmethod{}-\textit{hyd} & \cellcolor{gray!12.5}0.787 & \cellcolor{gray!12.5}22.69 & \cellcolor{gray!12.5}0.124 & \cellcolor{gray!12.5}1.91$\times$ \\
\bottomrule
\end{tabularx}
\vspace{-12pt}
\end{table}

\begin{figure}[t]
    \centering
    \input{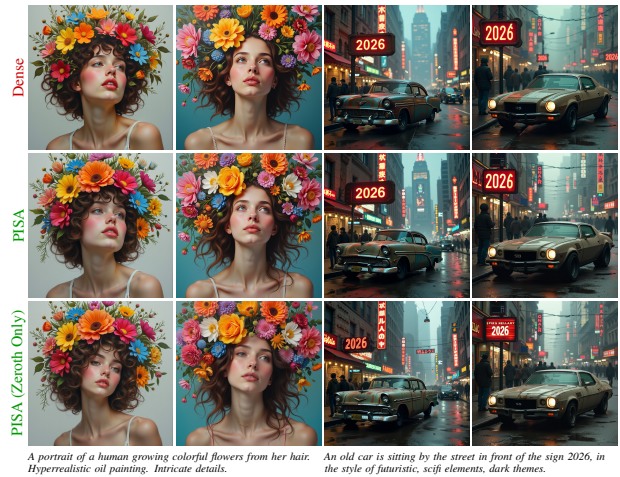}
    \vspace{-16pt}
    \caption{\textbf{Qualitative ablation of the approximation strategy}.
    }
    \vspace{-11pt}
    \label{fig:abla_vis}
\end{figure}

\subsection{Ablation Study}
\paragraph{Quantitative Validation.}
We examine the normalized $L_1$ error ratio of attention on Wan2.1-13B. As shown in Fig.~\ref{fig:abla_error}, the zeroth-order variant of \ourmethod{} (\texttt{Sparse + Block-0th Approx}) consistently achieves lower errors than standard sparse attention, while the hybrid-order variant (\texttt{Sparse + Hybrid Approx}) yields further improvements. This validates the numerical precision of our method, confirming its ability to minimize output error.

Beyond output error of attention, we evaluate generation similarity against the \textit{exact block-wise first-order} baseline. As shown in Table~\ref{tab:ablation}, our hybrid approximation significantly improves similarity over the zeroth-order method and approaches the exact baseline with superior efficiency. These results quantitatively demonstrate an optimal trade-off between generation fidelity and computational cost.

\vspace{-6pt}
\paragraph{Qualitative Validation.} We visualized the effects on the FLUX.1-dev ( 85\% sparsity) in Fig.~\ref{fig:abla_vis}. While the zeroth-order approximation maintains semantics and structural integrity, it struggles with local details. In contrast, the hybrid approximation significantly recovers these fine-grained details. This provides intuitive visual evidence that our hybrid strategy effectively preserves human-perceptible details.

\vspace{-4pt}
\section{Conclusion}
\vspace{-2pt}
In this work, we transcend the conventional ``keep-or-drop'' paradigm of sparse attention. We propose a novel Piecewise Sparse Attention that, through a unified ``exact-or-approximate'' execution, enables full attention span with sub-quadratic complexity, achieving an optimal trade-off between efficiency and accuracy.

\newpage

\section*{Impact Statement}
This paper presents work whose goal is to advance the field of Attention Mechanism. There are many potential societal consequences of our work, none which we feel must be specifically highlighted here.

\bibliography{icml2026}
\bibliographystyle{icml2026}

\newpage
\appendix
\onecolumn

\appendix
\label{appendix}

\section{Expanded Related Work}
\paragraph{Sparse Attention for DiTs.} Sparse attention reduces computational complexity by selectively computing only critical key-value pairs. In the field of video generation, static sparse attention~\cite{xisparse,zhang2025fast} relies on attention masks derived from intrinsic spatio-temporal sparsity priors, whereas dynamic sparse attention~\cite{yang2025sparse,xia2025training,zhang2025spargeattn,tan2025dsv,liu2025fpsattention,sun2025vorta,zhang2025training} determines which key-values to prune during runtime. To maximize hardware utilization, sparse attention typically operates at a block-level granularity where a block of queries attends to a shared set of key-value blocks. Although some methods in Large Language Models (LLMs) have achieved finer granularity by allowing token-wise queries to attend to block-wise key-values~\cite{lu2025moba,yuan2025native}, they utilize the inherent properties of causal attention. For instance, NSA~\cite{yuan2025native} leverages Grouped Query Attention (GQA)~\cite{ainslie2023gqa} to construct GEMM via the attention head dimension. Consequently, these techniques are difficult to implement in bidirectional attention.

\vspace{-2pt}
\paragraph{Bidirectional Linear Attention.} Linear attention reduces computational complexity to a linear scale by decomposing $\exp(\boldsymbol{q}\boldsymbol{k}^\top)$ into $\phi(\boldsymbol{q})\phi(\boldsymbol{k})^\top$ and leveraging the associative property of matrix multiplication to compute the $\boldsymbol{k}^\top\boldsymbol{v}$ product first. Initially, to mimic the normalization scheme of softmax attention, the kernel function $\phi(\cdot)$ was typically restricted to element-wise non-negative activation functions (e.g., ReLU or $\text{ELU}(\cdot)+1$) to ensure normalization validity. However, Recent works~\cite{qin2022devil,sun2023retentive} demonstrated that the normalization denominator is unnecessary and can be replaced by post-normalization of the output. Consequently, the non-negativity constraint on the kernel function is no longer required, allowing for functions like SiLU, a practice now widely adopted in recent LLMs with causal linear attention~\cite{yang2024gla,zhang2024gsa,yang2024deltanet}. Despite these advancements, existing variants of bidirectional linear attention continue to adhere to non-negative kernel functions to preserve the form of softmax normalization~\cite{han2023flatten,liu2024linfusion,xie2024sana,mengpolaformer,han2024demystify,fan2025breaking}.

\vspace{-2pt}
\paragraph{Linear Attention with Taylor Expansion.} In the field of LLMs, several studies~\cite{arora2024simple,gelada2025scaling,zhang2024hedgehog} utilize Taylor expansion to linearize softmax attention. Recent approaches, such as \textit{Based}~\cite{arora2024simple} and \textit{power attention}~\cite{zhang2024hedgehog}, have adopted a second-order Taylor expansion using the feature map $\phi_p(q)\phi_p(k) = (q^\top k)^p$ (note that this is an exact equivalence) to efficiently compute second-order terms. However, $\phi_p$ significantly increases the head dimension of $q$ and $k$. For instance, with a base head dimension of 64, \textit{Based} requires an expansion from $64 \to 4096$. Although \textit{power attention} propose a \textit{symmetric powers map} to mitigate this dimensional growth (e.g., reducing the expansion to $64 \to 2080$), the computational cost remains high due to its $\mathcal{O}(Nd^2)$ complexity. This overhead is considered tolerable in causal attention because inference is performed via token-wise decoding. Furthermore, since these methods expand the function around 0 for the entire sequence, they necessitate training from scratch.

\vspace{-2pt}
\paragraph{Hybrid Attention.} Recent works~\cite{lieber2024jamba,zhang2025test} have explored hybridizing softmax and linear attention either across or within layers. Common strategies involve computing the full sequence with both attention types independently and performing a weighted sum of their outputs, or allocating specific attention heads to each mechanism followed by concatenation and linear projection. Recently, efforts have shifted toward hybridization at a finer granularity. For instance, in LLMs, NHA~\cite{du2025native} combines linear attention with sliding window attention, where tokens lying outside the window are processed via the linear branch. Similarly, in the vision domain, SLA~\cite{zhang2025sla} integrates sparse and linear attention by dynamically selecting a subset of key-value pairs for linear computation. However, these methods typically aggregate the two branches via direct summation. This approach inevitably destroys the inherent normalized property of softmax attention, thereby precluding the training-free integration of pre-trained weights.

\section{Compare with Other Methods}
In contrast to recent studies, our approach distinguishes itself from existing paradigms in three key aspects. 

First, unlike pure sparse attention~\cite{xuxattention,xisparse,yang2025sparse,zhang2025spargeattn,shmilovich2025liteattention} which adheres to a ``keep-or-drop'' paradigm that discards the majority of key-value pairs, our method adopts an ``exact-or-approximate'' paradigm, blending sparse exact computation with efficient approximation.

Second, in contrast to hybrid attention mechanisms~\cite{du2025native,zhang2025sla} that typically perform a naive summation of softmax and linear attention outputs, we propose a fine-grained and mathematically rigorous mixing strategy. By employing block-wise Taylor expansion to approximate both the numerator and denominator, we unify the hybridization of softmax and linear attention within the canonical normalization framework of attention.

Third, compared to other Taylor-based methods~\cite{arora2024simple,gelada2025scaling,dass2023vitality} that perform global expansion over the entire sequence, resulting in prohibitive approximation errors that prevent the reuse of pre-trained weights, our method performs expansion around the local mean of each block. This significantly reduces approximation error, enabling the direct, training-free inheritance of pre-trained weights using only a first-order expansion.

To the best of our knowledge, we are the first to natively integrate block-wise Taylor expansion and sparse attention into a unified framework.

\section{Error Analysis of the Hybrid-Order Approximation}
\label{appendix:error_bound}

\paragraph{Notation.}
Let the input length be $L$, block size $B$, number of blocks $N=L/B$. For a fixed query row $\boldsymbol q_t\in\mathbb R^{1\times d}$ (with $t=iB+m$), denote the block-wise key/value rows by $\{\boldsymbol k_{j,n}\}_{j=1,\dots,N}^{n=1,\dots,B}$ and $\{\boldsymbol v_{j,n}\}$. Define the block centroid
\[
\bar{\boldsymbol k}_j := \frac{1}{B}\sum_{n=1}^B \boldsymbol k_{j,n},
\]
and the block-wise first-order matrix
\[
\boldsymbol H_j := \sum_{n=1}^B (\boldsymbol k_{j,n}-\bar{\boldsymbol k}_j)^\top \boldsymbol v_{j,n}\in\mathbb R^{d\times d}.
\]
The global first-order statistic is
\[
\bar{\boldsymbol H} := \frac{1}{N}\sum_{j=1}^N \boldsymbol H_j.
\]
For brevity define the block-centroid exponentials (including the scaling $\sqrt d$)
\[
\alpha_{t,j} := \exp\!\Big(\frac{\boldsymbol q_t\bar{\boldsymbol k}_j^\top}{\sqrt d}\Big).
\]
Let $\mathcal S_i$ be the set of selected (exact) blocks for query-block $i$, and $\mathcal U_i$ its complement (the unselected / tail blocks). Define the full attention denominator
\[
\mathcal D_t := \sum_{j=1}^N\sum_{n=1}^B \exp\!\Big(\frac{\boldsymbol q_t\boldsymbol k_{j,n}^\top}{\sqrt d}\Big),
\]
and the tail (unselected) mass
\[
\tau_t := \sum_{j\in\mathcal U_i}\sum_{n=1}^B \exp\!\Big(\frac{\boldsymbol q_t\boldsymbol k_{j,n}^\top}{\sqrt d}\Big).
\]
Finally denote the operator (spectral) norm by $\|\cdot\|_2$ and the Euclidean norm for vectors also by $\|\cdot\|_2$.

\vspace{3mm}
The core approximation step of our Hybrid-Order Approximation replaces the exact first-order contribution
\[
\sum_{j\in\mathcal U_i}\alpha_{t,j}\,\boldsymbol H_j
\quad\text{by}\quad
\Big(\sum_{j\in\mathcal U_i}\alpha_{t,j}\Big)\bar{\boldsymbol H}.
\]
Define the residual matrix
\[
\boldsymbol R_t := \sum_{j\in\mathcal U_i}\alpha_{t,j}(\boldsymbol H_j-\bar{\boldsymbol H}).
\]

\begin{lemma}[Residual operator norm bound]
\label{lemma:residual_norm}
Let
\[
M := \max_{j\in\mathcal U_i}\|\boldsymbol H_j-\bar{\boldsymbol H}\|_2.
\]
Then
\[
\|\boldsymbol R_t\|_2 \le M\cdot\sum_{j\in\mathcal U_i}\alpha_{t,j}.
\]
\end{lemma}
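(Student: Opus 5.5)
The bound follows from the triangle inequality for the spectral norm, using that every weight $\alpha_{t,j}$ is positive.

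First, note that each $\alpha_{t,j}=\exp\!\big(\boldsymbol q_t\bar{\boldsymbol k}_j^\top/\sqrt d\big)$ is strictly positive, so $|\alpha_{t,j}|=\alpha_{t,j}$.

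Second, apply the triangle inequality for $\|\cdot\|_2$ to the finite sum defining $\boldsymbol R_t$, together with absolute homogeneity of the norm:
\[
\|\boldsymbol R_t\|_2=\Big\|\sum_{j\in\mathcal U_i}\alpha_{t,j}(\boldsymbol H_j-\bar{\boldsymbol H})\Big\|_2\le\sum_{j\in\mathcal U_i}\alpha_{t,j}\,\|\boldsymbol H_j-\bar{\boldsymbol H}\|_2 .
\]

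Third, bound each factor $\|\boldsymbol H_j-\bar{\boldsymbol H}\|_2$ by $M$. This is valid because $M$ is defined as the maximum over exactly the index set $\mathcal U_i$ being summed. Factoring $M$ out of the sum gives $\|\boldsymbol R_t\|_2\le M\sum_{j\in\mathcal U_i}\alpha_{t,j}$.

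The only points needing care are the positivity of the $\alpha_{t,j}$ and the observation that $\bar{\boldsymbol H}$ is a fixed matrix independent of $j$, so no cancellation argument is required. If $\mathcal U_i$ is empty, both sides are zero, so the inequality holds trivially. There is no real obstacle here; the lemma is a direct norm estimate, and its role is as the first step toward Theorem~\ref{thm:main_error}.
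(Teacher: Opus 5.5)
Your proof is correct and follows the same route as the paper: apply the triangle inequality to the finite sum, pull out the positive scalars $\alpha_{t,j}$, and bound each $\|\boldsymbol H_j-\bar{\boldsymbol H}\|_2$ by $M$. The property you invoke, absolute homogeneity, is exactly the one needed; the paper cites submultiplicativity instead, which here amounts to the same thing.
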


\begin{proof}
By triangle inequality and submultiplicativity of the operator norm,
\[
\|\boldsymbol R_t\|_2
= \Big\|\sum_{j\in\mathcal U_i}\alpha_{t,j}(\boldsymbol H_j-\bar{\boldsymbol H})\Big\|_2
\le \sum_{j\in\mathcal U_i}\alpha_{t,j}\|\boldsymbol H_j-\bar{\boldsymbol H}\|_2
\le M\sum_{j\in\mathcal U_i}\alpha_{t,j},
\]
which proves the lemma.
\end{proof}

We restate Theorem~\ref{thm:main_error} for completeness as follows:

\begin{theorem}[Error Analysis of Global First-Order Approximation]
\label{thm:appendix_thm}
Assume there exists a constant $C_q > 0$, such that the query norm is bounded, i.e., $\|\boldsymbol q_t\|_2\le C_q$.

Let $\boldsymbol o_t$ be the (vector) attention output computed using the exact first-order term
\[
\boldsymbol o_t = \frac{\mathcal N_t}{\mathcal D_t},
\]
and let $\tilde{\boldsymbol o}_t$ be the output computed after replacing
$\sum_{j\in\mathcal U_i}\alpha_{t,j}\boldsymbol H_j$ by $(\sum_{j\in\mathcal U_i}\alpha_{t,j})\bar{\boldsymbol H}$ (while keeping the same denominator $\mathcal D_t$). Then
\begin{equation}
\label{eq:theorem_bound}
\|\tilde{\boldsymbol o}_t-\boldsymbol o_t\|_2
\le \frac{\|\boldsymbol q_t\|_2\;\|\boldsymbol R_t\|_2}{\mathcal D_t}
\le \frac{C_q\;M}{\mathcal D_t}\sum_{j\in\mathcal U_i}\alpha_{t,j}.
\end{equation}
Moreover, by Jensen's inequality for the convex exponential function (applied block-wise),
\begin{equation}
\label{eq:jensen_bound}
\sum_{j\in\mathcal U_i}\alpha_{t,j}
\le \frac{\tau_t}{B},
\end{equation}
and consequently
\[
\|\tilde{\boldsymbol o}_t-\boldsymbol o_t\|_2
\le C_q\;M\;\frac{\tau_t}{B\,\mathcal D_t}.
\]
Equivalently, if we denote the tail fraction $\rho_t := \tau_t/\mathcal D_t \in[0,1]$, then
\[
\|\tilde{\boldsymbol o}_t-\boldsymbol o_t\|_2 \le C_q\;M\;\frac{\rho_t}{B}.
\]
\end{theorem}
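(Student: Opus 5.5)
Since only the global first-order term changes and the denominator is kept, I would first write the error as a single residual divided by $\mathcal D_t$. Subtracting the two numerators, the exact sparse term and the block-wise zeroth-order term cancel, leaving
\[
\tilde{\boldsymbol o}_t-\boldsymbol o_t=\frac{\boldsymbol q_t\big((\sum_{j\in\mathcal U_i}\alpha_{t,j})\bar{\boldsymbol H}-\sum_{j\in\mathcal U_i}\alpha_{t,j}\boldsymbol H_j\big)}{\mathcal D_t}=-\frac{\boldsymbol q_t\boldsymbol R_t}{\mathcal D_t}.
\]
The operator-norm inequality $\|\boldsymbol q_t\boldsymbol R_t\|_2\le\|\boldsymbol q_t\|_2\|\boldsymbol R_t\|_2$ then gives the first inequality in \eqref{eq:theorem_bound}. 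The second follows from Lemma~\ref{lemma:residual_norm} together with $\|\boldsymbol q_t\|_2\le C_q$, using that $\mathcal D_t>0$.

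Next I would prove \eqref{eq:jensen_bound} block by block. For fixed $j\in\mathcal U_i$, the map $x\mapsto\exp(\boldsymbol q_t x^\top/\sqrt d)$ is the composition of a linear map with the convex exponential. Jensen's inequality over the uniform distribution on $n=1,\dots,B$ gives
\[
\alpha_{t,j}=\exp\Big(\frac1B\sum_{n}\frac{\boldsymbol q_t\boldsymbol k_{j,n}^\top}{\sqrt d}\Big)\le\frac1B\sum_{n=1}^B\exp\Big(\frac{\boldsymbol q_t\boldsymbol k_{j,n}^\top}{\sqrt d}\Big).
\]
Summing over $j\in\mathcal U_i$ yields $\sum_{j}\alpha_{t,j}\le\tau_t/B$. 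Substituting this into \eqref{eq:theorem_bound} gives $C_qM\tau_t/(B\mathcal D_t)$, which is $C_qM\rho_t/B$ by the definition of $\rho_t$. The constraint $\rho_t\in[0,1]$ holds because $\tau_t$ is a subsum of the nonnegative terms making up the full denominator $\mathcal D_t$.

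The main obstacle is pinning down which denominator is meant. The appendix defines $\mathcal D_t$ as the full exact denominator, while the main text's $\mathcal D_t$ uses centroid approximations for $\mathcal U_i$. I would follow the appendix convention, with the same $\mathcal D_t$ shared by both outputs, so that the difference is exactly linear in $\boldsymbol R_t$. If the main-text $\mathcal D_t$ is used instead, the same argument still gives the intermediate bound with that $\mathcal D_t$. The final step, though, then needs $\tau_t/\mathcal D_t$ with the approximate denominator, which need not be at most $1$, so this convention must be stated explicitly. The scaling $1/\sqrt d$ also has to be absorbed consistently into $\alpha_{t,j}$ and $\tau_t$, as the appendix does.
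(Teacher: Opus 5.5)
Your proposal is correct and matches the paper's proof step for step. The shared denominator reduces the error to $-\boldsymbol q_t\boldsymbol R_t/\mathcal D_t$; the operator-norm bound plus Lemma~\ref{lemma:residual_norm} gives \eqref{eq:theorem_bound}; and block-wise Jensen on the centroid exponential gives \eqref{eq:jensen_bound}. Your extra remark is also correct and left implicit in the paper: $\rho_t\le 1$ requires taking $\mathcal D_t$ to be the full exact denominator, as the appendix does, and can fail for the main text's centroid-approximated denominator.
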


\begin{proof}
Write the exact numerator as $\mathcal N_t = \mathcal N_t^{(\mathcal S)} + \mathcal N_t^{(0)} + \mathcal N_t^{(1)}$, where $\mathcal N_t^{(\mathcal S)}$ is the selected (exact) contribution, $\mathcal N_t^{(0)}$ denotes the block-wise zeroth-order term (the grouped value sums multiplied by centroid exponentials), and
\[
\mathcal N_t^{(1)} = \boldsymbol q_t\sum_{j\in\mathcal U_i}\alpha_{t,j}\boldsymbol H_j
\]
is the exact first-order contribution from the unselected blocks. The approximated numerator replaces $\mathcal N_t^{(1)}$ by
\[
\tilde{\mathcal N}_t^{(1)} = \boldsymbol q_t\Big(\sum_{j\in\mathcal U_i}\alpha_{t,j}\Big)\bar{\boldsymbol H}.
\]
Therefore the numerator error induced by this replacement is
\[
\Delta\mathcal N_t := \tilde{\mathcal N}_t^{(1)}-\mathcal N_t^{(1)}
= \boldsymbol q_t\underbrace{\Big(\sum_{j\in\mathcal U_i}\alpha_{t,j}\bar{\boldsymbol H}-\sum_{j\in\mathcal U_i}\alpha_{t,j}\boldsymbol H_j\Big)}_{-\boldsymbol R_t}
= -\boldsymbol q_t\boldsymbol R_t.
\]
Since both outputs share the same denominator $\mathcal D_t$ by assumption, we have
\[
\tilde{\boldsymbol o}_t-\boldsymbol o_t = \frac{\Delta\mathcal N_t}{\mathcal D_t} = -\frac{\boldsymbol q_t\boldsymbol R_t}{\mathcal D_t}.
\]
Taking Euclidean norm and using submultiplicativity of operator norm,
\[
\|\tilde{\boldsymbol o}_t-\boldsymbol o_t\|_2
\le \frac{\|\boldsymbol q_t\|_2\;\|\boldsymbol R_t\|_2}{\mathcal D_t}.
\]
Applying Lemma~\ref{lemma:residual_norm} yields the first inequality in \eqref{eq:theorem_bound}. 
To obtain \eqref{eq:jensen_bound}, note that for any block $j$,
\[
\alpha_{t,j} = \exp\!\Big(\frac{\boldsymbol q_t\bar{\boldsymbol k}_j^\top}{\sqrt d}\Big)
= \exp\!\Big(\frac{1}{B}\sum_{n=1}^B\frac{\boldsymbol q_t\boldsymbol k_{j,n}^\top}{\sqrt d}\Big)
\le \frac{1}{B}\sum_{n=1}^B \exp\!\Big(\frac{\boldsymbol q_t\boldsymbol k_{j,n}^\top}{\sqrt d}\Big),
\]
where the inequality follows from Jensen's inequality since $\exp(\cdot)$ is convex. Summing over $j\in\mathcal U_i$ gives \eqref{eq:jensen_bound}. Combining yields the stated bounds and completes the proof.
\end{proof}

\paragraph{Remarks.}
\begin{enumerate}
    \item The assumption that the query norm is bounded is mild and justifiable, as modern models predominantly employ QK-Norm (Query-Key Normalization) in their attention layers.
    \item The bound in Theorem~\ref{thm:appendix_thm} cleanly separates \emph{(i)} a structural heterogeneity term $M=\max_j\|\boldsymbol H_j-\bar{\boldsymbol H}\|_2$, which measures how similar each block's first-order contribution is to the global statistic, and \emph{(ii)} a tail mass factor $\rho_t=\tau_t/\mathcal D_t$, which measures how much attention weight remains in the unselected (approximated) blocks. The block size $B$ appears in the denominator because a single block-centroid exponential $\alpha_{t,j}$ is at most the average of the $B$ per-row exponentials (Jensen), hence $\sum_j\alpha_{t,j}\le\tau_t/B$.
    \item Theorem~\ref{thm:appendix_thm} bounds only the error caused by replacing the exact per-block matrices by their global mean. Additional error terms originate from the the truncation of the Taylor series. According to the Lagrange remainder form of Taylor's theorem, this error is bounded by the second-order moment of the block deviations. Since \ourmethod{} operates on unselected blocks where the attention distribution is sparse and flat (it is precisely this insight that motivates our work), this quadratic residual is negligible compared to the first-order term. Therefore, Theorem~\ref{thm:appendix_thm} effectively captures the dominant error bound of our method.
\end{enumerate}

\section{Covariance-Aware Block Selection}
\label{appendix:topk}
Recalling the theoretical error bound in Theorem~\ref{thm:appendix_thm} that $\text{error} \propto \exp(\boldsymbol{q}\boldsymbol{k}^\top)\cdot \|\boldsymbol{H}\|_2$, our goal is to identify blocks with both large attention scores and high approximation errors. We simplify this objective by rectifying the attention score using $M_j := \|\boldsymbol{H}_j - \bar{\boldsymbol{H}}\|$. To eliminate the influence of the absolute magnitude of $M_j$, we define the block-wise routing factor as 
\( \exp(\boldsymbol{q}_t \bar{\boldsymbol{k}}_{j}^\top) \cdot {M_j}/{\bar{M}} \), where $\bar{M}$ is the mean of $M_{j\in\mathcal{U}}$. To integrate this into the softmax operation, we take its logarithm:
\begin{equation}\label{eq:appendix_topk}
    \text{Score}_{t,j} = \text{Softmax}\left( \frac{\boldsymbol{q}_t \bar{\boldsymbol{k}}_j^\top}{\sqrt{d}} + \log(M_j) - \log(\bar{M}) \right).
\end{equation}
Since $\bar{M}$ is a constant term, it cancels out during the Softmax normalization, thereby yielding Eq.~\eqref{eq:topk}.

\section{Implementation Details for Reproducibility}
\label{appendix:detail}
We detail the specific configuration of the warmup strategy in Table~\ref{tab:config_video} and Table~\ref{tab:config_image}. Beyond this, to ensure a strictly fair comparison, all other parameters regarding sparse attention adhere to the official implementations. When integrating different attention methods into video and image generation models, we utilize the official recommended configurations for all remaining hyperparameters (e.g., sampling steps, classifier-free guidance (CFG) scale), without further enumeration.

\begin{table*}[!ht]
\centering
\scriptsize
\label{tab:reproduce_combined}
\begin{minipage}[t]{0.49\linewidth}
    \centering
    \caption{Configuration of Video Generation} 
    \label{tab:config_video}
    \begin{tabularx}{\linewidth}{l l | >{\centering\arraybackslash}X | >{\centering\arraybackslash}X | >{\centering\arraybackslash}X}
    \toprule
    \multirow{2}{*}{\textbf{Ref.}} & \multirow{2}{*}{\textbf{Config}} & \textbf{Wan2.1} & \textbf{Wan2.1} & \textbf{Hunyuan} \\
     & & \textbf{1.3B} & \textbf{14B} & \textbf{13B} \\
    \midrule
    \multirow{2}{*}{\textit{Table~\ref{tab:main_video}}}
    & Dense Layer & 1 & 1 & 1 \\
    & Dense Step  & 15 & 10 & 10 \\
    \midrule
    \multirow{2}{*}{\textit{Table~\ref{tab:without_warmup}}}
    & Dense Layer & 0 & 0 & 0 \\
    & Dense Step  & 0 & 0 & 0 \\
    \bottomrule
    \end{tabularx}
\end{minipage}
\hfill
\begin{minipage}[t]{0.49\linewidth}
    \centering
    \caption{Configuration of Image Generation}
    \label{tab:config_image}
    \begin{tabularx}{\linewidth}{l l | >{\centering\arraybackslash}X | >{\centering\arraybackslash}X | >{\centering\arraybackslash}X | >{\centering\arraybackslash}X}
    \toprule
    \multirow{2}{*}{\textbf{Ref.}} & \multirow{2}{*}{\textbf{Config}} & \textbf{SD 3.5} & \textbf{SD 3.5} & \textbf{FLUX.1} & \textbf{FLUX.1} \\
     & & \textbf{medium} & \textbf{turbo} & \textbf{schnell} & \textbf{dev}\\
    \midrule
    \multirow{2}{*}{\textit{Table~\ref{tab:main_img}}}
    & Dense Layer & 4 & 4 & 4 & 4 \\
    & Dense Step  & 0 & 0 & 0 & 0 \\
    \midrule
    \multirow{2}{*}{\textit{Table~\ref{tab:ablation}}}
    & Dense Layer & - & - & - & 4 \\
    & Dense Step  & - & - & - & 0 \\
    \bottomrule
    \end{tabularx}
\end{minipage}

\vspace{-4pt}
\end{table*}

\section{More Generation Samples}
\label{appendix:samples}
We provide additional image and video generation samples in Fig.~\ref{fig:appendix_imgs} and Fig.~\ref{fig:appendix_vids}.

\begin{figure}[t]
    \centering
    \input{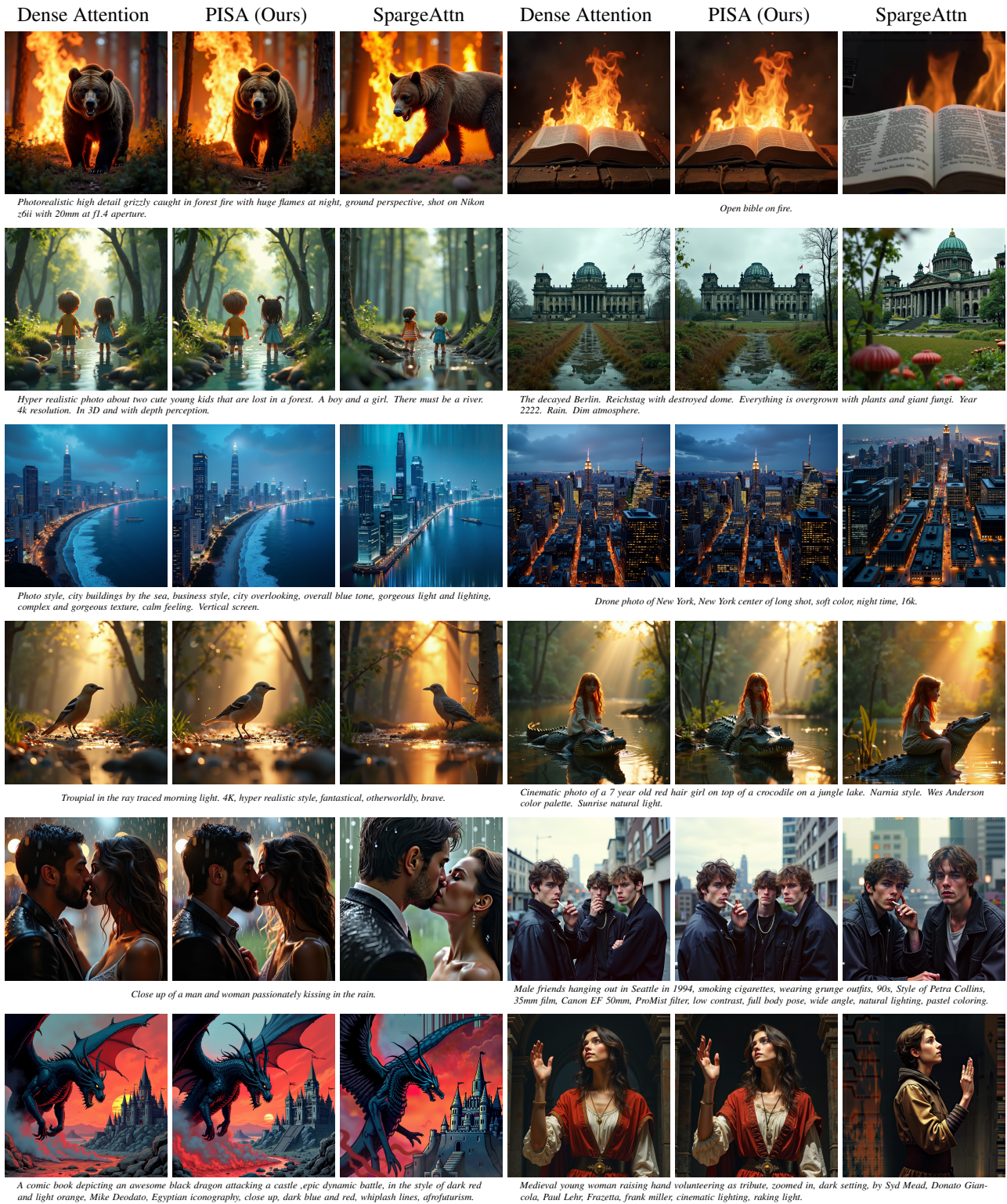}
    \caption{Text-to-Image generation samples on FLUX.1-dev. The SpargeAttn under the 80\% sparsity while \ourmethod{} under the 85\% sparsity.}
    \label{fig:appendix_imgs}
\end{figure}

\begin{figure}[t]
    \centering
    \input{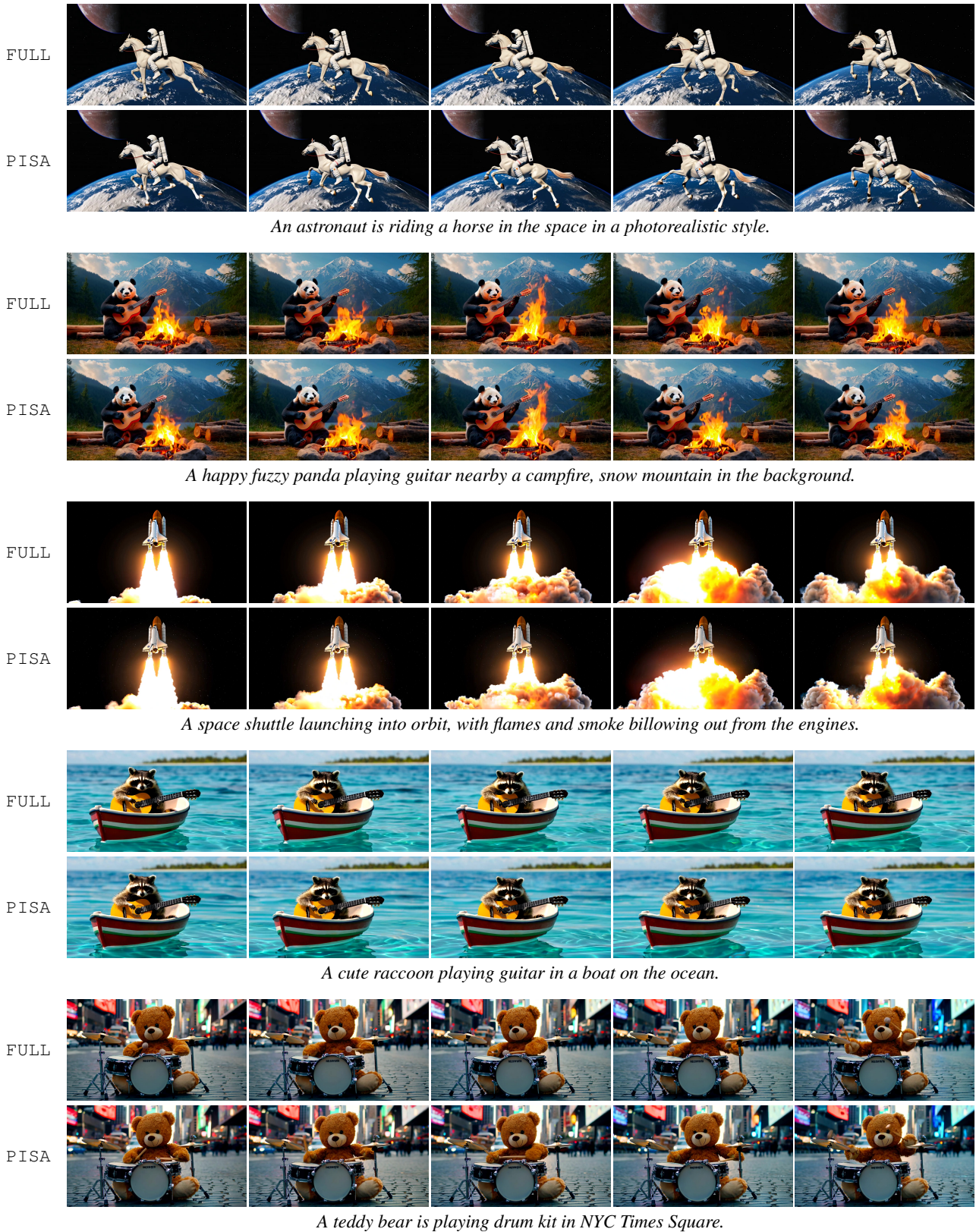}
    \caption{Text-to-Video generation samples on Wan2.1-14B. The \ourmethod{} under the 87.5\% sparsity with 10 steps and 1 layer warmup.}
    \label{fig:appendix_vids}
\end{figure}


\end{document}